\documentclass[final]{cvpr}

\usepackage{times}
\usepackage{epsfig}
\usepackage{amsmath}
\usepackage{amssymb}
\usepackage{bm}

\usepackage{url}
\usepackage{multirow}
\usepackage{color}
					
\usepackage[dvipsnames]{xcolor}

\definecolor{mygrey}{rgb}{0.3, 0.3, 0.3}
\definecolor{airforceblue}{rgb}{0.36, 0.54, 0.66}

\newcommand{\bz}{\bm{z}}

\newcommand{\bt}{\bm{t}}
\newcommand{\bv}{\bm{v}}

\usepackage{mathtools}
\usepackage{algpseudocode}
\usepackage{algorithm}
\usepackage{algorithmicx}
\newcommand{\commentT}[1]{{}}

\DeclareMathOperator*{\argmin}{arg\,min}

\usepackage{textcomp}
\usepackage{filecontents}
\usepackage{microtype}
\usepackage{float}
\usepackage{adjustbox}
\usepackage{booktabs,makecell,tabularx}
\usepackage{url}
\usepackage{booktabs}

\usepackage{graphicx}
\usepackage[font=small]{caption}
\usepackage{subcaption}

\newcolumntype{C}[1]{>{\centering\arraybackslash}p{#1}}
\newcolumntype{L}{>{\raggedright\arraybackslash}X}
\usepackage{siunitx}
\usepackage[utf8]{inputenc}

\usepackage{etoolbox}
\usepackage{xparse}
\makeatletter

\usepackage{array}
\usepackage{makecell}

\usepackage{diagbox}

\usepackage{footnote}
\makesavenoteenv{tabular}

\usepackage{environ}

\usepackage{pifont}
\newcommand{\cmark}{\ding{51}}%
\newcommand{\xmark}{\ding{55}}%

\usepackage{titling}

\usepackage[pagebackref=true,breaklinks=true,colorlinks,bookmarks=false]{hyperref}

\newcommand{\spm}[1]{\color{mygrey}\tiny $\pm{#1}$} 
\newcommand{\tc}[2]{\shortstack{{#1} \\ {\spm{#2}}}}
\newcommand{\ttc}[2]{\thead{\tc{\footnotesize{#1}}{#2}}}

\usepackage{amsthm}

\newtheorem{proposition}{Proposition}

\newtheorem{pcorollary}{Corollary}[proposition]

\usepackage{color,soul}
\usepackage[framemethod=default]{mdframed}
\usepackage[normalem]{ulem}


\NewEnviron{tight_eq}{  
\begin{equation}
    \setlength{\belowdisplayskip}{2pt}
\setlength{\abovedisplayskip}{2pt} 
\BODY
    \end{equation}}

\newcommand{\R}{\mathbb{R}}
\newcommand{\E}{\mathbb{E}}
\newcommand{\cD}{\mathcal{D}}

\newcommand{\cX}{\mathcal{X}}
\newcommand{\cZ}{\mathcal{Z}}

\usepackage{enumitem}
\newcommand{\mypar}[1]{\vspace{-3mm}\paragraph{#1}}

\begin{document}


\title{Multiclass non-Adversarial Image Synthesis \\ with Application to Classification from Very Small Sample}

\author{Itamar Winter\\
The Hebrew University of Jerusalem\\
{\tt\small itamar.winter@mail.huji.ac.il}
\and
Daphna Weinshall\\
The Hebrew University of Jerusalem\\
{\tt\small daphna@cs.huji.ac.il}
}

\date{}

\maketitle

\begin{abstract}
    The generation of synthetic images is currently being dominated by Generative Adversarial Networks (GANs). Despite their outstanding success in generating realistic looking images, they still suffer from major drawbacks, including an unstable and highly sensitive training procedure, mode-collapse and mode-mixture, and dependency on large training sets. In this work we present a novel non-adversarial generative method - Clustered Optimization of LAtent space (COLA), which overcomes some of the limitations of GANs, and outperforms GANs when training data is scarce. In the full data regime, our method is capable of generating diverse multi-class images with no supervision, surpassing previous non-adversarial methods in terms of image quality and diversity. In the small-data regime, where only a small sample of labeled images is available for training with no access to additional unlabeled data, our results surpass state-of-the-art GAN models trained on the same amount of data. Finally, when utilizing our model to augment small datasets, we surpass the state-of-the-art performance in small-sample classification tasks on challenging datasets, including CIFAR-10, CIFAR-100, STL-10 and Tiny-ImageNet. A theoretical analysis supporting the essence of the method is presented.
\end{abstract}

\section{Introduction}

\begin{figure}[ht]
\captionsetup[subfigure]{labelformat=empty}
     \centering
     \begin{subfigure}[b]{0.225\textwidth}
     \caption{COLA}
         \centering
         \includegraphics[]{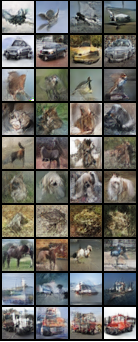}
         
         \label{fig:clog}
     \end{subfigure}
     \begin{subfigure}[b]{0.225\textwidth}
        \caption{GLO}
         \centering
         \includegraphics[]{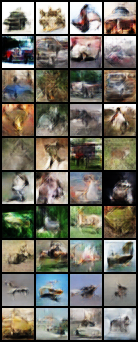}
         
         \label{fig:glo}
     \end{subfigure}
     \hfill
        \caption{Training on CIFAR-10 with no labels: the images generated by our method (left), which imposes semantic structure on the latent space, are superior to the alternative method (right). Each row holds a random sample from a distinct object class.}
        \label{fig:full_data_qualitative}
        \vspace{-0.5cm}
\end{figure}

Generative image modeling is a long-standing challenge in computer vision. Unconditional generative models aim at learning the underlying distribution of the data using a finite training set, and synthesizing new samples from the learned distribution. Recently, deep generative models have shown remarkable results in synthesizing high-fidelity and diverse images. Most notably, Generative Adversarial Networks (GANs) \cite{goodfellow2014generative} have been extensively used in classic computer vision tasks such as image generation, image restoration and domain translation, alongside traditional learning tasks such as data augmentation  \cite{frid2018synthetic} and clustering \cite{ben2018gaussian, mukherjee2019clustergan}.

Since their inception, the unsupervised training of GANs achieved effective models able to produce natural-looking images, while relying on a simple and easily modified framework. Nevertheless, and despite numerous efforts for improvement, GANs still exhibit some critical drawbacks that arise from the adversarial nature of the optimization. These include: (i) an unstable training procedure, that is highly sensitive to the choice of initialization, architecture and hyper-parameters; (ii) often the learned distribution suffers from mode-collapse, in which  only a subset of the real distribution is covered by the model, or mode-mixture, where different modes are mixed with each other. These problems are amplified when training data is scarce \cite{xu2018empirical}.

These drawbacks have motivated research into non-adversarial alternatives such as Variational Auto Encoders (VAE) \cite{kingma2013auto}  and Generative Latent Optimization (GLO) \cite{bojanowski2017optimizing}.  VAEs learn generative deep models that include a representation layer defining the model's latent space, where both the prior and posterior distributions over the latent space are approximated by parametric Gaussian distributions. GLO learns a non parametric prior over the latent space in unison with the generative model. Although the VAE framework stands on solid theoretical foundations, VAEs generally do not generate sharp images, partially due to the restrictive parametric assumptions that are enforced. GLO, on the other hand, imposes hardly any limitation on the learned distribution over the latent space, which is guided only by the reconstruction performance of the model. Alas, as a result the structure of the latent  space holds no semantic information, and cannot be effectively sampled from. These limitations are aggravated when dealing with multi-modal distributed data, as is typically the case with multi-class data. 

Broadly speaking, most contemporary generative models rely on common and often implicit assumptions: (i) the Manifold Hypothesis, which assumes that real-world high-dimensional data lie on low-dimensional manifolds embedded within the high-dimensional space; (ii) that there exists a mapping from a low dimension latent space onto the real data manifold; (iii) that this latent space can be approximated by a single Gaussian distribution (such is the latent prior distribution in most variants of GANs, VAEs, and GLO); and (iv) that the generative model is capable of learning the assumed mapping. While these assumptions may hold true when trying to learn from data that resides on a single manifold, it is impossible for a continuous mapping (\ie CNN generator) to effectively map a connected latent space onto a disconnected data manifold of a multi-class distribution \cite{kelley2017general}. 

In this work, we seek to overcome both the inherent drawbacks of the GAN framework and the deficiency of the uni-modal Gaussian prior in modeling the latent space. Thus, in Section~\ref{sec:method} we propose an unsupervised non-adversarial generative model, that optimizes the latent space by fitting a multi-modal data distribution. Unlike GLO, our latent space preserves semantical information about the data, while the multi-modal distribution allows for the efficient and direct sampling of new data. As will be shown in Section~\ref{sec:fulldata}, the distribution over the latent space that is learnt by our model captures semantic properties of the data. As a result, our model is capable of generating better images in terms of image quality, diversity and discriminability. In Section~\ref{sec:theory} we provide some theoretical justification for our method.

Expanding to domains where GANs do not excel, our model is designed to be applicable for downstream tasks where training data is scarce. The task of learning from small sample is usually tackled with the aid of external data or prior knowledge. While transfer-based techniques work well when the source and target domains share distributional similarities, it is not at all the case when the target data comes from a considerably different domain (such as medical imaging) \cite{raghu2019transfusion, litjens2017survey}. Furthermore, gaining access to large labeled datasets may not always be possible due to legal and ethical considerations. In contrast, here we tackle the small-sample classification task where\textbf{ no prior knowledge or external data is present}. In this setting, the training algorithm may get as few as 5 images per class, having access to no additional labeled or unlabeled data. This constitutes a very challenging task. In Section~\ref{sec:small-sample} we show that, when using our model to augment the real data, we are able to advance the state-of-the-art and achieve top performance in small sample classification tasks. 

Our main contributions are as follows:
\begin{itemize}[itemsep=1pt, leftmargin=5mm]
    \item Introduce a novel unsupervised non-adversarial generative model capable of synthesizing diverse discriminable images from multi-class distributions (Section~\ref{sec:method}).

        \item Provide sufficient conditions and a simplified theoretical framework, under which our method can be beneficial in approximating under-sampled distributions (Section~\ref{sec:theory}). 
    \item Demonstrate superior image synthesis capabilities when training data is scarce, as compared to state-of-the-art GAN models (Section~\ref{sec:fulldata}).
    \item  Apply our model to small-sample classification tasks, surpassing all previous work in this domain (Section~\ref{sec:small-sample}).
\end{itemize}

\section{Recent Related Work}

\paragraph{Modeling disconnected data manifolds.}

The issue with mapping a connected latent space onto a disconnected data manifold was mainly addressed in the context of overcoming mode-collapse in GANs. \cite{che2016mode, larsen2016autoencoding, donahue2016adversarial, srivastava2017veegan} use an encoder to match the latent code with the data distribution. While the latent representation of these methods is optimized via a reconstruction loss of the decoder, our method learns a representation that holds semantical information.  
	  
Another line of work \cite{khayatkhoei2018disconnected, arora2017generalization, hoang2018mgan, ghosh2018multi} uses multiple generators in order to cover all the modes in the data, while \cite{an2019ae, hoshen2019non} learn a mapping from a normally distributed noise to an optimized latent structure in a non-adversarial framework. Other works use a GMM prior over the latent space in VAEs \cite{dilokthanakul2016deep, shu2016stochastic} and GANs \cite{ben2018gaussian}. Finally, \cite{chen2016infogan} combines discrete and continuous latent factors to learn a disentangled representation of the data. 

\mypar{Data Embedding and Feature Learning.}
Learning a meaningful low-dimensional embedding for high-dimensional data has been significantly improved by advances in deep neural networks and self-supervised learning.  Thus, \cite{chang2017deep, haeusser2018associative, ji2019invariant} all harness the large capacity of deep neural networks to learn efficient clustered representations of natural images. In a related line of work, \textit{self-supervised learning} involves the learning of meaningful visual features from a pretext task using labels that are produced from the data itself with no direct supervision. These include jigsaw puzzle solving \cite{noroozi2016unsupervised}, predicting positions of patches in an image \cite{doersch2015unsupervised}, and predicting image rotations (\emph{'RotNet'}) \cite{gidaris2018unsupervised}. In this work, we learn a clustered embedding of the data and a self-supervised pretext task en masse, which greatly improves the quality of the learned representation. 

\mypar{Learning from small sample.}
\label{ss related work}
Classification from small sample, with no prior knowledge or access to external data, has been chiefly approached by attempting to augment the sample into a sufficiently large training set. Thus \textbf{DADA} \cite{zhang2019dada} adapts a GAN model for this purpose, \textbf{TANDA} \cite{ratner2017learning} uses GANs to learn generic data augmentations composed of pre-defined transformations using large unlabeled data, DHN \cite{oyallon2017scaling} uses a hybrid network that incorporates learnable weights with a scattering network of predefined wavelets, and \textbf{CFVAE-DHN} \cite{lin2020efficient} augments the latent variables of a VAE, which in turn generates additional data that is classified using a DHN. Likewise, \cite{barz2020deep} promotes the use of the cosine-loss, and \cite{brigato2020close} promotes low-complexity networks.

Other methods usually incorporate some form of \textit{transfer learning} \cite{zhuang2020comprehensive}, where parameters that are learned in a source domain are transferred and utilized in a different target domain.  Transfer learning from large datasets to smaller ones has also been investigated in generative models \cite{wang2018low, wang2020minegan, noguchi2019image, wang2018transferring}. One recent prominent research paradigm in this context is termed \textit{few-shot learning} \cite{wang2020generalizing}. 
We consider this paradigm to be an instance of transfer (or meta) learning and not strict classification from small sample, as it requires access to an external labeled dataset.

\section{Proposed Method}
\label{sec:method}

\subsection{COLA: Unsupervised Algorithm}
\label{method:unsupervised}

Our method, Clustered Optimization of LAtent space (COLA), is an unsupervised method which learns a generative model for the synthesis of images. The method is designed to cope with a small training set of natural images, portraying distinct object categories. It involves three steps, the first two of which are illustrated in Fig.~\ref{fig:model}. Our code is supplied in the supplementary materials.

\mypar{Step I: Clustering the latent space.} 
The goal here is to deliver a mapping from the data space to a latent space, while clustering the mapped points into compact $K$ clusters, see illustration in Fig.~\ref{fig:model}. To this end we train a deep convolutional network $E_\theta$, which maps each data point to a fixed low-dimensional target point in the latent space - the unit sphere in $\mathbb{R}^K$. The target is sampled from a pre-defined distribution over the latent space. 

\begin{figure}[t]
\begin{center}

  \includegraphics[width=\linewidth]{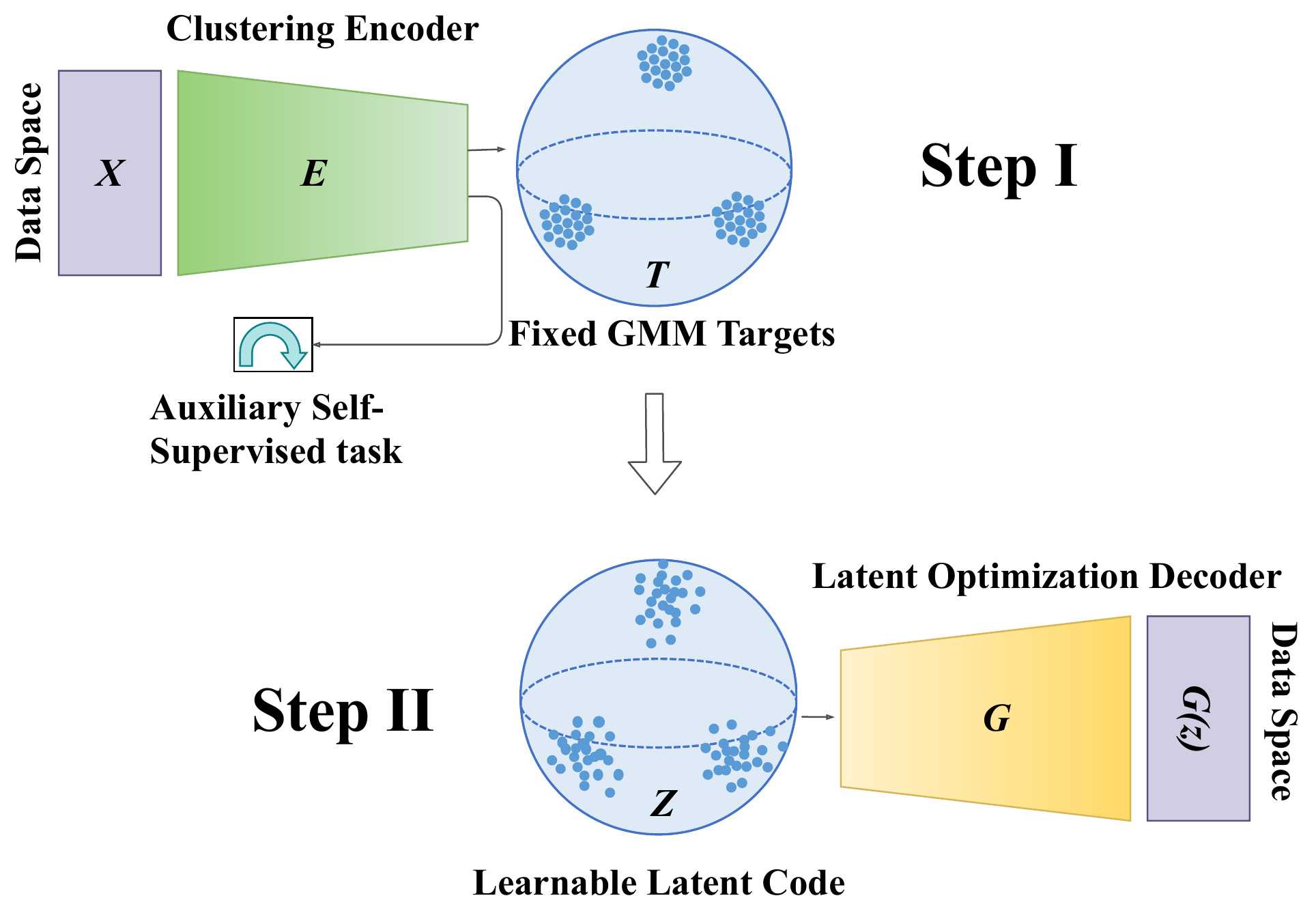}
\end{center}
\vspace{-0.3cm}
  \caption{Illustration of our model: In step I images are mapped to fixed low-dimensional targets $T$. In step II these targets form a latent space $Z$ that is trained in conjunction with the generator parameters to  reconstruct the original image.}
\label{fig:model}
\vspace{-0.5cm}
\end{figure} 

Specifically, given an unlabeled dataset $\mathcal{X} = \{x_i\}_{i=1}^N $, the model is initialized with some random assignment  $\{(x_i, \bt_i)\}_{i=1}^N$, where each $\bt_i \in \mathbb{R}^K$ is sampled from a GMM distribution with K-components, and normalized to length 1. Training involves the minimization of
\begin{equation}
\label{eq:step1}
        \|E_\theta(x_i) - \bt_{\pi(i)}\|_2^2   
\end{equation}
over the assignment $\{(x_i, \bt_{\pi(i)})\}_{i=1}^N$ and parameters $\theta$. 

This optimization problem is solved with SGD, and involves two steps per mini-batch. First, the sample  $\{x_i\}_{i\in b}$ is mapped onto the latent space. The assignment problem for $\{(x_i, \bt_{\pi(i)})\ | i \in b\}$ is solved using the Hungarian Algorithm \cite{kuhn1955hungarian} applied to the following problem:
    \begin{equation}
    \label{eqn:1}
    \pi^* =  \argmin_{\pi : b \leftrightarrow b} \sum_{i\in b} \|E_\theta(x_i) - \bt_{\pi(i)}\|_2^2  
    \end{equation} 
Subsequently $\pi^*$ is inserted into Eq. \ref{eq:step1}, which is then optimized  w.r.t $~\theta$.  A full formulation and implementation details can be found in Alg.~\ref{supp: clustering pseudocode} and Section~\ref{supp: step1 implementation} respectively in the Appendix. Alg.~\ref{supp: clustering pseudocode} is enhanced with self-supervision based on the auxiliary \emph{'RotNet'} task \cite{gidaris2018unsupervised}, and consistency regularization where augmented images are mapped to the same cluster.

The output of this model constitutes a latent space, where the representations of semantically similar images reside in proximity, and images from distinct classes are located further apart. This representation is used to initialize the latent space of the generative model in step II. To simplify the presentation, henceforth we let $\bt_i = \bt_{\pi^*(i)}$ denote the final target associated with $x_i$.

\begin{figure*}[bht]
\begin{center}
\vspace{-0.1cm}
\includegraphics[width=0.9\linewidth]{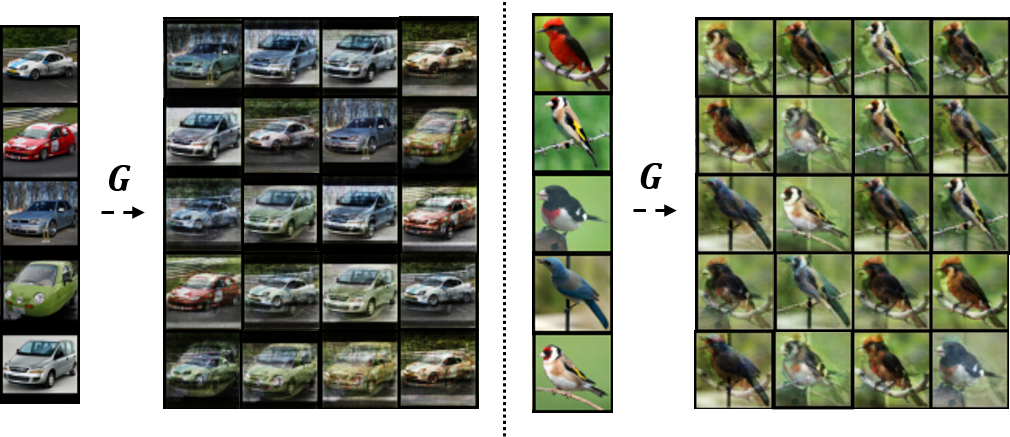}
\end{center} 
\vspace{-0.5cm}
\caption{Synthetic images generated by our model when trained on STL-10 with 5 images per class and no external data. Real images are shown on the left, synthetic images are shown on the right.}
\label{fig:cars_birds}
\vspace{-0.35cm}
\end{figure*}

\mypar{Step II: Image generation.} \label{step 2}
Given a matching between data points $\mathcal{X}$ and targets $\mathcal{T}$ -  $\{(x_i,\bt_i)\}_{i=1}^N$, a latent code $\mathcal{Z} = \{\bz_i\}_{i=1}^N$ is constructed such that
\begin{equation*}
\bz_i =  (\frac{\bt_i}{\|\bt_i\|_2}, \frac{\bv_i}{\|\bv_i\|_2}) \in \mathbb{R}^{K+d}
\end{equation*}
Above $\bv_i \sim \mathcal{N}(\vec{0}, \sigma I_{d \times d})$ denotes an additional source of variation, and $\bt_i$ denotes the class-component of the code.

The parameters of a CNN generator function $G_\theta : \mathcal{Z} \longrightarrow \mathcal{X}$ are optimized in conjunction with the learnable representation vectors $\mathcal{Z}$, as illustrated in Fig.~\ref{fig:model}. The optimization problem is defined as:
\begin{tight_eq}
\label{eqn:2}
\min_{\theta,\mathcal{Z}} \frac{1}{N} \sum_{i=1}^N \mathcal{L}_{rec}\big(G_\theta(\bz_i), x_i\big) \quad s.t. \quad \|\bz_i\|_2 = 1
\end{tight_eq}
where $\mathcal{L}_{rec}$ denotes the reconstruction loss between the original image $x_i$ and the image generated by the model $G_\theta(z_i)$. 

As shown in \cite{hoshen2019non}, the best image quality for this kind of models may be obtained when $\mathcal{L}_{rec}$ is realized with the perceptual loss \cite{johnson2016perceptual}:
\begin{tight_eq}
\label{eqn:perceptual_loss}
\mathcal{L}_{vgg}(x,x') =  |x - x'|\: + \sum_{layers:\: i}^k |l_i(x) - l_i(x')| 
\end{tight_eq}

In (\ref{eqn:perceptual_loss}) $l_i$ denotes the perceptual layer in a pre-trained VGG network \cite{simonyan2014very}. Nevertheless, since external data cannot be used in the small sample scenario adopted here, $\mathcal{L}_{rec}$ is realized in our method with the Laplacian Pyramid loss:
\begin{tight_eq}
\label{eqn:lap_loss}
     \mathcal{L}_{lap}(x,x') =  |x - x'|\: +\gamma \sum_{i}^k 2^{-2i} |L_i(x) - L_i(x')| 
\end{tight_eq}

In (\ref{eqn:lap_loss})  $L_i(x)$ denotes the i-th level of the Laplacian pyramid representation of $x$ \cite{ling2006diffusion}. The sum of differences is weighted to preserve the high-frequencies of the original image. The components of the representation vectors are normalized after each epoch to length 1, projecting them back to the unit spheres in $\mathbb{R}^K , \mathbb{R}^d$ respectively. 

This step is summarized below in Alg.~\ref{step 2 algo}. Full implementation details are presented in Appendix~\ref{supp: step2 implementation}.

\mypar{Step III: posterior distribution over the latent space.}

After training, a posterior distribution over the latent space is obtained by fitting a unique multivariate Gaussian to each cluster in the latent space. Sampling is then performed from the uniform mixture of these Gaussian distributions. 

    \begin{algorithm}[ht] 
             \caption{\textbf{: Training the Generative Model}} 
               \label{step 2 algo}

            \begin{flushleft}
                \textbf{INPUT:}  \\
                \quad matched pairs $\{(x_i, \bt_i)\}_{i=1}^N \subset X \times [0,1]^K$ from step I\\

                 \quad $G_\theta$ - CNN Generator with parameters $\theta$\\
                \quad  $\lambda^\theta_e, \lambda^z_e$ -  learning rate at epoch e of $\theta, Z$ \\
                \quad $ \sigma $ - pre-defined latent std 

            \end{flushleft}
            \begin{algorithmic}
            \For {\textit{i=1...N}} 
            \Comment{initialize latent space}
           \State sample $\bv_i \sim \mathcal{N}(\vec{0}, \sigma I_{d \times d})$
            \State $\bz_i \leftarrow (\frac{\bt_i}{\|\bt_i\|_2}, \frac{\bv_i}{\|\bv_i\|_2}) \in \mathbb{R}^{K+d}$
            \EndFor
            
            \For {\textit{e=1...epochs}}
            \For {\textit{i=1...iters}}
            \State sample batch $\{(x_i, \bz_i) | i \in B\}$
           
            $$ \mathcal{L}_B= \frac{1}{|B|}\sum_{i\in B}\mathcal{L}_{rec}(x_i,G_\theta(\bz_i)) $$  
            \State $\theta \leftarrow   \theta - \lambda^\theta_e (\nabla_\theta \mathcal{L}_B) $
            \State $\bz \leftarrow   \bz - \lambda^z_e (\nabla_z \mathcal{L}_B) $
            \EndFor
            
            \State  \quad\quad $\bt\leftarrow\bz_{[1:K]},\quad\quad\bv\leftarrow\bz_{[K+1:K+d]} $
            \State $\forall_i \quad \bz_i \leftarrow (\frac{\bt_i}{\|\bt_i\|_2}, \frac{\bv_i}{\|\bv_i\|_2}) $\Comment{Normalize inputs}\\
            \EndFor
        \end{algorithmic}
    \end{algorithm}

\subsection{sCOLA: Supervised Algorithm} 
\label{supervised}

In the supervised framework, we have a labeled dataset with $K$ classes  $\mathcal{X} = \{(x_1,y_1), (x_2, y_2), \cdots , (x_N, y_N)\}$, where $y_i \in [K]$ denotes the class label of $x_i$, and $\bm{e}_y^i$ denotes the one-hot representation of the labels. The supervised version of our method, \emph{sCOLA} includes steps II and III of COLA. The clustering in step I is replaced by the supervision labels from the training data, where each $\bt_i$ is replaced by the corresponding $\bm{e}_y^i$. Fig.~\ref{fig:cars_birds} shows images generated by our model with only 5 training examples per class.

\section{Theoretical Analysis}
\label{sec:theory}

Stripped off its technical details, the method in Section~\ref{sec:method} essentially learns a noisy surrogate distribution $Z$ to approximate the real data distribution $X$ and generate new data. In this paper, our ultimate goal is not to generate new high quality data, but rather to estimate some function $f: X\longrightarrow \Omega$ from a sample of $X$. When $X$ denotes data sampled from $K$ discrete classes, a multi-class classifier is such a function whose codomain is either $[K]$ or $\R^K$. If the sample of $X$ is too small, the surrogate distribution $Z$ can be used to generate more data and improve the estimation of $f$. The analysis below identifies sufficient conditions on the respective sample sizes, such that improvement can indeed be guaranteed.

\mypar{Notations.}
Assume an i.i.d. sample of random variable pairs - $\{X_i,Y_i\}_{i=1}^N$, where $X_i/_{Y_i=k}\overset{iid}{\sim} \cD_{k}$ and $\cD_k$ denotes the class conditional distributions of variable $X$. Let $\cX_k$ denote the conditional sub-sample of datapoints from class $k$: $\cX_k=\{X_{i_j},Y_{i_j}/_{Y_{i_j}=k}\}_{j=1}^{m_k}$, where $\sum_{k=1}^K m_k=N$. 

For simplicity, we will assume in our analysis that $f$ depends only on the expected value of the conditional distributions $\{\cD_k\}_{k=1}^K$, denoted $\mu_k$. Let $\tilde \mu_k(X)$ denote an estimator of $\mu_k$ from an iid sample of random variable $X$. Our task is to obtain a set of good estimators $\{\tilde \mu_k\}_{k=1}^K$. In order to simplify the notations, we shall henceforth drop the class index k, with the understanding that the following analysis does not depend on $k$.

In accordance, let $\cX^m$ denote an \emph{iid} sample of size $m$ from the real conditional distribution $X$ of some class $k$, and $\cZ^n$ denote an \emph{iid} sample of size $n$ from the class surrogate distribution $Z$. Let $\mu_x=\mu$ denote the expected value of $X$, and $\mu_z$ denote the expected value of $Z$, where $|\mu_x-\mu_z|=d$. Let $\bar\cX^m$ and $\bar\cZ^n$ denote the population means of the two samples respectively. Recall that $Var[\bar\cX^m ] =\frac{Var[X]}{m}$ and $Var[\bar\cZ^n ] =\frac{Var[Z]}{n}$.

As customary, we use the population mean of each sample to estimate the unknown distribution's mean $\mu$. Accordingly:
\begin{equation}
\begin{split}
\tilde \mu(X) &= \bar\cX^m \\
\tilde \mu(Z) &= \bar\cZ^n 
\end{split}
\end{equation}
The error of the two estimators is measured as follows:
\begin{equation}
\begin{split}
Err(X) &= (\bar\cX^m -\mu)^2 \\
Err(Z) &= (\bar\cZ^n -\mu)^2 
\end{split}
\end{equation}

\begin{proposition}   
If $Var[X] > md^2$, then
\begin{equation*}
n \geq \frac{m Var[Z]}{Var[X] -md^2} ~\implies~  \E[Err(Z)] \leq \E[Err(X)]
\end{equation*}
\end{proposition}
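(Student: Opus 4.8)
The plan is to reduce both expected errors to closed-form expressions in terms of the sample sizes $m,n$, the two variances, and the mean gap $d$, and then rearrange the target inequality $\E[Err(Z)]\le\E[Err(X)]$ into the stated lower bound on $n$. The key observation driving the whole argument is that $\bar\cX^m$ is an \emph{unbiased} estimator of $\mu=\mu_x$, whereas $\bar\cZ^n$ is a \emph{biased} estimator of $\mu$ (it targets $\mu_z\ne\mu$), so the two errors must be handled by different means and it is precisely this bias that the bound on $n$ must compensate for.

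First I would evaluate $\E[Err(X)]$. Since $\E[\bar\cX^m]=\mu_x=\mu$, the quantity $(\bar\cX^m-\mu)^2$ is the squared deviation of the estimator from its own mean, so its expectation is just the variance of the sample mean: $\E[Err(X)]=Var[\bar\cX^m]=\frac{Var[X]}{m}$, invoking the stated identity for the variance of a sample mean.

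Next I would evaluate $\E[Err(Z)]$ through the bias--variance decomposition. Writing $\bar\cZ^n-\mu=(\bar\cZ^n-\mu_z)+(\mu_z-\mu)$, squaring, and taking expectations annihilates the cross term because $\E[\bar\cZ^n-\mu_z]=0$, leaving $\E[Err(Z)]=Var[\bar\cZ^n]+(\mu_z-\mu)^2=\frac{Var[Z]}{n}+d^2$, where the final step uses $|\mu_x-\mu_z|=d$. The term $d^2$ is the irreducible cost of the surrogate's misspecified mean, and it does not decay as $n$ grows.

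Finally I would assemble the comparison. The target $\E[Err(Z)]\le\E[Err(X)]$ becomes $\frac{Var[Z]}{n}+d^2\le\frac{Var[X]}{m}$, equivalently $\frac{Var[Z]}{n}\le\frac{Var[X]-md^2}{m}$. The hypothesis $Var[X]>md^2$ is exactly what makes the right-hand side strictly positive, which is the condition needed to cross-multiply and isolate $n$ without reversing the inequality's direction; doing so yields $n\ge\frac{m\,Var[Z]}{Var[X]-md^2}$, the claimed bound. There is essentially no analytical obstacle in this proposition; the only point demanding care is the sign of the denominator, which is why the positivity assumption $Var[X]>md^2$ appears as an explicit hypothesis rather than as an incidental remark, and it is also what makes the bound interpretable: when the real sample $m$ is small enough that $md^2<Var[X]$, a sufficiently large surrogate sample can drive the mean-squared error below that of the real-data estimator despite the bias $d$.
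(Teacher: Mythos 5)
Your proof is correct and follows essentially the same route as the paper's: bound (or, in your case, exactly compute) $\E[Err(Z)]$ via the bias--variance decomposition as $\frac{Var[Z]}{n}+d^2$, identify $\E[Err(X)]=\frac{Var[X]}{m}$, and rearrange using the positivity of $Var[X]-md^2$. The only cosmetic difference is that you keep the decomposition as an equality (the paper writes it as an upper bound), which incidentally shows the stated sample-size threshold is tight, but the argument is the same.
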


\begin{proof}
\begin{equation*}
\begin{split}
\E[Err(Z)] &=  \mathbb{E}[(\bar\cZ^n - \mu)^2] ~~~~~~~~~~~~~~~~{\color{airforceblue}(\textrm{use}~\mu= \mu_x)}\\
&\leq \mathbb{E}[(\bar\cZ^n - \mu_z)^2] + d^2  = \frac{Var[Z]}{n} + d^2
\end{split}
\end{equation*}
As long as $Var[X] > md^2$
\begin{equation*}
n \geq \frac{m Var[Z]}{Var[X] -md^2} ~\implies~  \frac{Var[Z]}{n} + d^2 \leq \frac{Var[X]}{m} 
\end{equation*}
and therefore
\begin{equation*}
\E[Err(Z)] \leq Var[\bar\cX^m] = \E[Err(X)]
\qedhere
\end{equation*}
     
\end{proof}
\begin{pcorollary}
For each class $k$, if the sample of the surrogate random variable $Z$ is sufficiently large
\begin{equation*}
n \geq \frac{m Var[Z]}{Var[X] -md^2}
\end{equation*}
then the estimator of classifier $f$ obtained from $\cZ^n$ is more accurate than the estimator obtained from  $\cX^m$.
\end{pcorollary}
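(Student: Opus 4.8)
The plan is to reduce the corollary to the Proposition by applying the latter separately to each of the $K$ classes, and then to combine the per-class guarantees using the standing assumption that $f$ is determined by the class means. First I would reinstate the class index that was suppressed in the Proposition: for each class $k\in[K]$ we have a real conditional sample $\cX^{m_k}$ and a surrogate sample $\cZ^{n_k}$, with associated quantities $Var[X_k]$, $Var[Z_k]$ and bias $d_k=|\mu_{x,k}-\mu_{z,k}|$. Applying the Proposition verbatim to class $k$ shows that, whenever $Var[X_k]>m_k d_k^2$ and the hypothesis
\begin{equation*}
n_k \geq \frac{m_k\, Var[Z_k]}{Var[X_k]-m_k d_k^2}
\end{equation*}
holds, the surrogate mean estimator is at least as accurate as the real one in expected squared error, i.e. $\E[Err_k(Z)]\leq \E[Err_k(X)]$.

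Next I would invoke the modeling assumption stated before the Proposition, namely that $f$ depends on the conditional distributions only through their means, so that $f=F(\mu_1,\dots,\mu_K)$ for a fixed map $F$. The two estimators of the classifier are then $\hat f_X=F(\tilde\mu_1(X),\dots,\tilde\mu_K(X))$ and $\hat f_Z=F(\tilde\mu_1(Z),\dots,\tilde\mu_K(Z))$, where each $\tilde\mu_k(\cdot)$ is the corresponding population mean. Since the $K$ class samples are drawn independently, the per-coordinate estimation errors do not interact, and the accuracy of $\hat f$ is governed entirely by the $K$ marginal errors $\E[Err_k(\cdot)]$ established in the previous step.

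The step I expect to be the main obstacle is making ``more accurate classifier'' precise, because the Proposition only controls each marginal squared error and says nothing about how $F$ propagates those errors. To close the gap I would impose a regularity condition on $F$---most naturally that $F$ is coordinatewise Lipschitz---so that the estimation error of $f$ is dominated by a nonnegative combination $\sum_k c_k\,\E[Err_k(\cdot)]$ of the per-class errors. Under such a condition the per-class inequalities $\E[Err_k(Z)]\leq \E[Err_k(X)]$ add up to $\E[\mathrm{dist}(\hat f_Z,f)]\leq \E[\mathrm{dist}(\hat f_X,f)]$ for the induced error measure, which is exactly the claimed improvement; the independence of the samples guarantees the decomposition carries no cross terms. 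The only genuine content beyond the Proposition is therefore this Lipschitz/monotonicity reduction, which I would state explicitly as the hypothesis that converts a per-mean improvement into a per-classifier improvement.
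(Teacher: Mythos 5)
Your proposal takes essentially the same route as the paper: the corollary is stated there without a separate proof, being treated as an immediate per-class application of the Proposition, with the standing assumption that $f$ depends only on the class means $\{\mu_k\}_{k=1}^K$ serving as the (informal) bridge from per-mean accuracy to classifier accuracy. Your explicit Lipschitz/monotonicity hypothesis on the map $F(\mu_1,\dots,\mu_K)$ simply makes rigorous a propagation step that the paper leaves implicit; otherwise the argument is identical.
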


\begin{proposition}
Assume that $Pr[0 \leq X, Z \leq 1] = 1$, which can be achieved by dataset normalizing. Then $\forall \epsilon > d$, if $n\geq m (\frac{\epsilon}{\epsilon-d})^2$, then the bound obtained by the Hoeffding's inequality on $Pr(|Err(Z)| \geq \epsilon)$ is tighter than the corresponding bound on $Pr(|Err(X)| \geq \epsilon)$.
\end{proposition}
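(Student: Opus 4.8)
The plan is to apply Hoeffding's inequality separately to the two sample means and then compare the resulting exponential tail bounds, declaring the $Z$-bound \emph{tighter} precisely when its exponent decays faster. The one genuine complication is that $\bar\cZ^n$ concentrates around $\mu_z$ rather than around the target $\mu=\mu_x$, and this mean gap $d$ must be absorbed before Hoeffding can be applied cleanly.

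First I would record the bound for the real sample. Since the $X_i$ are i.i.d.\ with $0\le X\le 1$ and $\bar\cX^m$ estimates $\mu=\mu_x$, the standard two-sided Hoeffding inequality gives $Pr(|\bar\cX^m-\mu|\ge\epsilon)\le 2\exp(-2m\epsilon^2)$. Here I read the event $\{|Err(X)|\ge\epsilon\}$ as the deviation event $\{|\bar\cX^m-\mu|\ge\epsilon\}$; this is the reading that treats $\epsilon$ as an error tolerance on the mean estimate and that reproduces the factor $(\epsilon/(\epsilon-d))^2$ quoted in the statement. (Taking the square in the definition $Err(X)=(\bar\cX^m-\mu)^2$ literally would instead replace $\epsilon$ by $\sqrt\epsilon$ throughout, so this interpretive choice is worth making explicit at the outset.)

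Next I would handle the surrogate sample, where the target mean $\mu_x$ differs from the distribution mean $\mu_z$ by $d$. The key step is to decouple the two via the reverse triangle inequality: on the event $\{|\bar\cZ^n-\mu_x|\ge\epsilon\}$ one has $|\bar\cZ^n-\mu_z|\ge\epsilon-|\mu_x-\mu_z|=\epsilon-d$, so that $\{|\bar\cZ^n-\mu_x|\ge\epsilon\}\subseteq\{|\bar\cZ^n-\mu_z|\ge\epsilon-d\}$. This inclusion is exactly what forces the hypothesis $\epsilon>d$, since it guarantees $\epsilon-d>0$ and hence a positive recentered threshold for which Hoeffding is informative. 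Applying Hoeffding to the recentered, $[0,1]$-bounded sample then yields $Pr(|\bar\cZ^n-\mu_x|\ge\epsilon)\le 2\exp(-2n(\epsilon-d)^2)$.

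Finally I would compare the two upper bounds. Both have the form $2\exp(-2(\cdot))$, so by monotonicity of $\exp$ the $Z$-bound is smaller exactly when $n(\epsilon-d)^2\ge m\epsilon^2$, which after dividing by $(\epsilon-d)^2>0$ is precisely $n\ge m\,(\epsilon/(\epsilon-d))^2$, the stated sufficient condition. The comparison itself is routine; I expect the only real obstacle to be the mean-gap bookkeeping in the triangle-inequality step, together with being explicit that the claim concerns the two Hoeffding \emph{upper bounds} rather than the true tail probabilities, and that the condition $\epsilon>d$ is what keeps the recentered exponent strictly positive.
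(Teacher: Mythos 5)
Your proof is correct and follows essentially the same route as the paper's: apply Hoeffding to $\bar\cX^m$ directly, absorb the mean gap $d$ via the triangle inequality to get the event inclusion $\{|\bar\cZ^n-\mu_x|\ge\epsilon\}\subseteq\{|\bar\cZ^n-\mu_z|\ge\epsilon-d\}$, apply Hoeffding to the recentered sample, and compare exponents. Your explicit flagging of the reading of $Err$ as the absolute deviation rather than the literal squared error is a point the paper glosses over silently (it writes $Pr(|Err(X)|\ge\epsilon)=Pr(|\bar\cX^m-\mu_x|\ge\epsilon)$ without comment), so making that choice explicit is a small improvement in rigor rather than a deviation.
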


\begin{proof}
We invoke the Hoeffding's inequality: 
\begin{equation*}
 Pr ( |\bar\cX^m - \mu |> \epsilon) \leq 2e^{-2m\epsilon^2} 
\end{equation*}
and note that 
\begin{equation*}
|Err(Z)| \leq |\bar\cZ^n - \mu_z| + |\mu_x -  \mu_z| = |\bar\cZ^n - \mu_z| + d 
\end{equation*}
It follows that
\begin{equation*}
\begin{split}
 Pr(|Err(Z)| \geq \epsilon) &\leq Pr(|\bar\cZ^n - \mu_z| + d \geq \epsilon ) \\ 
 &= Pr(|\bar\cZ^n - \mu_z| \geq \epsilon - d) \\
 &\leq  2e^{-2n(\epsilon-d)^2}:=B(Z) \\
Pr(|Err(X)| \geq \epsilon) &= Pr(|\bar\cX^m - \mu_x| \geq \epsilon) \\
&\leq  2e^{-2m\epsilon^2} :=B(X)
\end{split}
\end{equation*}

Finally
\begin{equation*}
n\geq m (\frac{\epsilon}{\epsilon-d})^2  \implies  B(Z) \leq B(X)
\qedhere
\end{equation*} 

\end{proof}

\begin{pcorollary}
For each class $k$, if the sample from the surrogate random variable $Z$ is sufficiently large
\begin{equation*}
n  \geq  m (\frac{\epsilon}{\epsilon-d})^2 ,
\end{equation*}
then the estimator of classifier $f$ obtained from $\cZ^n$ is more confident than the estimator obtained from  $\cX^m$.
\end{pcorollary}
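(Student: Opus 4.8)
The plan is to compare the two Hoeffding bounds by controlling the error of the surrogate estimator through a triangle-inequality decomposition, then matching exponents. First I would invoke Hoeffding's inequality in its standard form for a mean of $n$ i.i.d.\ variables bounded in $[0,1]$, which is exactly what the normalization assumption $Pr[0\leq X,Z\leq 1]=1$ licenses; this yields $Pr(|\bar\cZ^n-\mu_z|\geq s)\leq 2e^{-2ns^2}$ for any $s>0$, and the analogous statement for $\bar\cX^m$.

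Next I would handle the fact that $Err(Z)$ measures deviation from the \emph{true} mean $\mu=\mu_x$, not from $\mu_z$. The key step is the triangle inequality $|Err(Z)|=|\bar\cZ^n-\mu_x|\leq|\bar\cZ^n-\mu_z|+|\mu_x-\mu_z|=|\bar\cZ^n-\mu_z|+d$. From this I can bound the event $\{|Err(Z)|\geq\epsilon\}$ by the event $\{|\bar\cZ^n-\mu_z|\geq\epsilon-d\}$, which is only meaningful when $\epsilon>d$ (so that $\epsilon-d>0$ and Hoeffding applies with a positive deviation); this is precisely the hypothesis $\forall\epsilon>d$. Applying Hoeffding with $s=\epsilon-d$ gives the surrogate bound $B(Z)=2e^{-2n(\epsilon-d)^2}$, while the real-sample bound is simply $B(X)=2e^{-2m\epsilon^2}$.

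Finally, comparing $B(Z)\leq B(X)$ reduces to comparing exponents: since both have the same prefactor $2$ and the exponential is monotone, $B(Z)\leq B(X)$ iff $n(\epsilon-d)^2\geq m\epsilon^2$, i.e.\ $n\geq m\bigl(\tfrac{\epsilon}{\epsilon-d}\bigr)^2$, which is the stated condition. This last step is pure algebra once the exponents are in hand.

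I expect the main obstacle to be conceptual rather than computational: one must be careful that the triangle-inequality step is an \emph{upper} bound on the tail probability, so the inequality chain runs in the correct direction, and that the requirement $\epsilon>d$ is genuinely needed for Hoeffding to yield a nontrivial (decreasing-in-$n$) bound on $|\bar\cZ^n-\mu_z|\geq\epsilon-d$. A subtler point worth flagging is what the statement actually claims: it compares the two \emph{bounds} $B(Z)$ and $B(X)$, not the true tail probabilities, so no claim is made that the surrogate estimator is genuinely more concentrated---only that the Hoeffding certificate is tighter. Keeping that distinction explicit is the cleanest way to make the argument airtight.
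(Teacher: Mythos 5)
Your proposal is correct and follows essentially the same route as the paper: Hoeffding's inequality applied to $\bar\cZ^n$ around $\mu_z$, the triangle-inequality shift $|\bar\cZ^n-\mu_x|\leq|\bar\cZ^n-\mu_z|+d$ yielding $B(Z)=2e^{-2n(\epsilon-d)^2}$ versus $B(X)=2e^{-2m\epsilon^2}$, and the exponent comparison giving the stated sample-size condition. Your closing caveat---that the claim compares the two Hoeffding \emph{bounds} rather than the true tail probabilities---is exactly how the paper's Proposition~2 is phrased, so it is a faithful (and appropriately careful) reading rather than a deviation.
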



\section{Image generation, Large and Small Sample}
\label{sec:fulldata}

We shall now demonstrate the capability of our model to produce diverse and discriminable images, employing evaluation metrics that quantify these attributes. Firstly, we compare our model with competitive conditional GAN models that use large and computationally heavy architectures. While these models maintain superiority on  large datasets, this dominance diminishes as the sample size drops.  Secondly,  we show that our unsupervised variant surpasses other unsupervised  generative adversarial models using the same architecture. Lastly, we show that our model consistently outperforms other non-adversarial methods in terms of image quality and diversity, regardless of sample size.

\subsection{Methodology}

\paragraph{Datasets} 
The datasets we use are included in Table~\ref{table:datasets}.
\mypar{Evaluation scores.} 
Designing meaningful quantitative evaluation measures for generative models is a challenging ongoing research area. Presently two scores seem to dominate the field: the  Inception Score \cite{salimans2016improved}, and the Fr\'echet Inception Distance (FID) \cite{heusel2017gans}. Noting that the Inception score does not take into account the real data distribution and cannot capture intra-class diversity, we will not be using this metric in our evaluation.

FID compares the statistics of activations in the penultimate layer of the Inception network (trained on  'ImageNet') between real and generated images, computing the distance between the uni-modal Gaussian distributions that best fit the activation patterns. This score has two major drawbacks: (i) \label{problem 1} it captures image quality and diversity on a single scale, and therefore cannot distinguish between the two factors;  (ii) \label{problem 2} it is based on the Inception network that has been trained with 1,000 classes of 'ImageNet', and may not be suitable for all datasets \cite{barratt2018note}. In Appendix~\ref{supp: fid inadequacy} we show that the FID score also fails to reveal intra-class diversity, making it less useful for multi-class datasets (see also  \cite{liu2018improved}). Implementation details for the FID score used in our experiments can be found in Appendix~\ref{supp: fid implementation}.

Given the problems discussed above, we seek an additional score that can reliably measure how well the generated images fit the true distribution of the data. More importantly,  considering that generative models are commonly used in down-stream tasks, we seek a score that can measure the usefulness of the model generations in such tasks. To this end we adopt the scores proposed in \cite{shmelkov2018good} ('GAN-Train') and  \cite{ravuri2019classification} ('CAS'), which are based on training a classification network on the generated images, and evaluating it on real images. The classification accuracy of this network forms an implicit measure of the recall and precision of the generated dataset, since it can only achieve a high score if the synthetic data is sufficiently diverse and discriminable. In our experiments, we follow the protocol defined in \cite{ravuri2019classification}.

\mypar{Generative methods used for comparisons.} 

We compare our model against state-of-the-art generative models, one adversarial model based on the GAN framework, and a second non-adversarial method:
\setlist{nolistsep}
\begin{enumerate}[noitemsep]
    \item Adversarial \textbf{ CGAN-PD} \cite{miyato2018cgans}: a conditional GAN with Projection-Discriminator, trained and implemented in accordance with  \cite{lee2020mimicry}.

    \item Non-adversarial \textbf{GLO} \cite{bojanowski2017optimizing}: the original model augmented with the superior perceptual loss from Eq.~\ref{eqn:perceptual_loss}. Similarly to step III above, after training we fit a Gaussian Mixture Model to the learned latent space.
\end{enumerate}
Implementation details can be found in Appendix~\ref{supp: step2 implementation}.

\subsection{Results} 
\label{sec: full_data results}

\paragraph{Unsupervised.}
In the unsupervised scenario, we compare our model to the baseline GLO model, see Fig.~\ref{fig:unsupervised}. Clearly our model outperforms GLO on all datasets and metrics, and produces significantly better looking images as demonstrated in Fig.~\ref{fig:full_data_qualitative}. Furthermore, we recall
that different GAN models can reach similar FID scores if given a high enough computational budget \cite{lucic2018gans}. We therefore adopt the fair comparison protocol proposed in \cite{lucic2018gans}, where the architectures of all the models are fixed to the one used in 'InfoGAN' \cite{chen2016infogan}, and all models possess the same computational budget for hyper-parameter search. In this protocol, our method outperforms all GAN variants and is on par with the state-of-the-art non-adversarial methods, see Fig.~\ref{fig:lucic_gans}. 

\begin{figure}[hbt]

   \includegraphics[width=\linewidth]{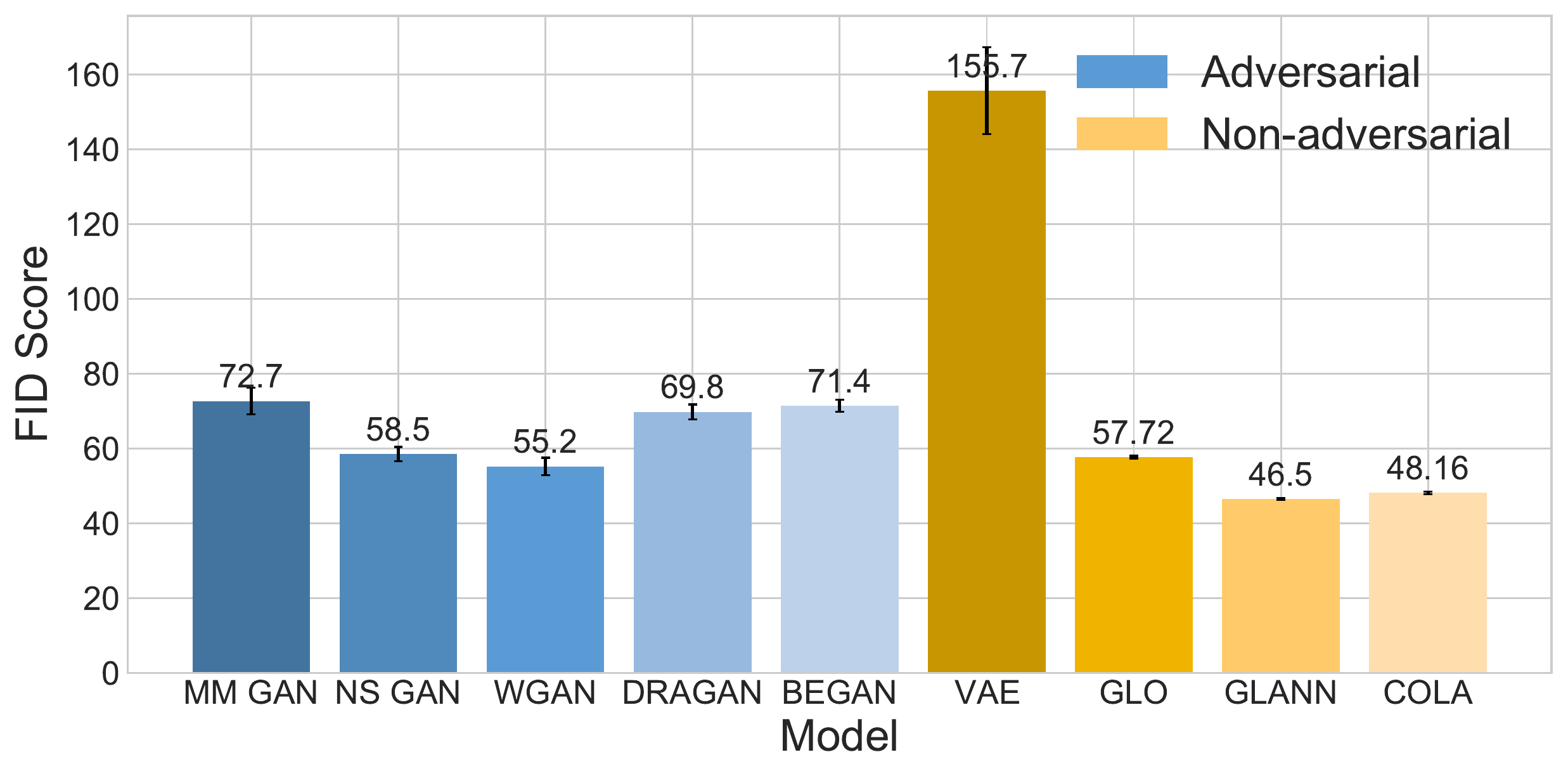}
   \vspace{-0.5cm}
   \caption{FID score computed for CIFAR-10, when all models share the same architecture of 'InfoGAN'~\cite{chen2016infogan}. Unlike all other models in this comparison, our method allows for the sampling of images from different individual classes.}
\label{fig:lucic_gans}
\vspace{-0.35cm}
\end{figure}

\begin{figure}[hbt]
	\centering
		\includegraphics[width=0.49\linewidth]{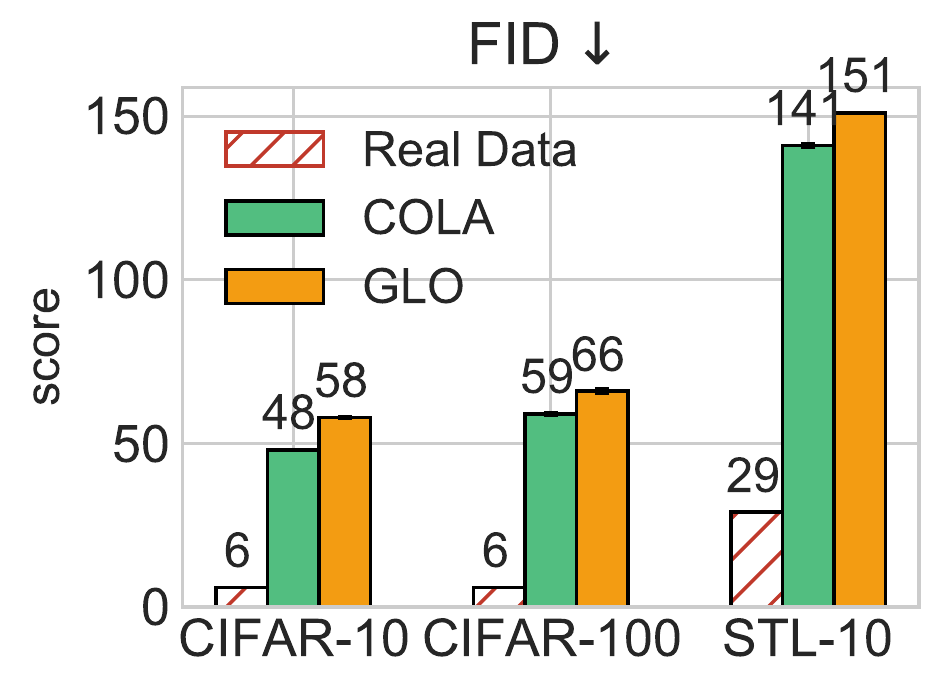}
		\includegraphics[width=0.49\linewidth]{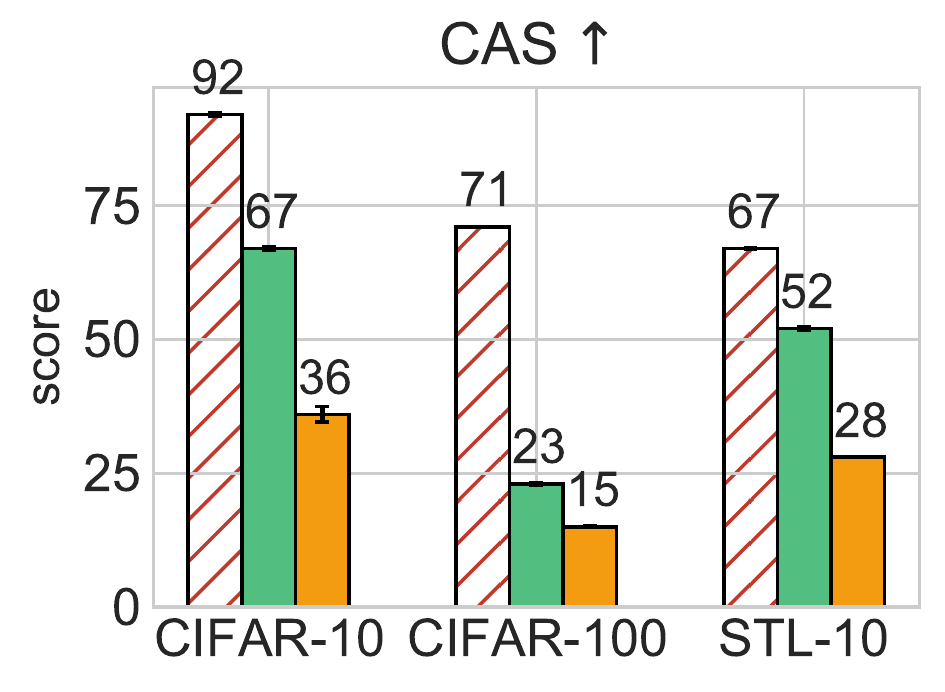}
	\caption{Comparison between GLO and COLA using the FID and CAS scores. Our model shows a clear advantage in all cases.}
	\label{fig:unsupervised}
	\vspace{-0.5cm}
\end{figure}

\mypar{Supervised.}
When learning from fully labeled datasets, we evaluate our model against the state-of-the-art conditional GAN variant CGAN-PD with varying sample sizes. Results are shown in Fig.~\ref{fig:full supervised}. 

Although conditional GANs obtain better FID scores on large datasets, their performance deteriorates rapidly when training size decreases. Furthermore, our model outperforms GANs when consulting the CAS score on almost all configurations. A qualitative comparison  presented in Fig.~\ref{fig:qualitative_gan_scola} and in Fig.~\ref{supp:fig:cgan_vs_scola_qualitative} in the Appendix suggests that this deterioration may be attributed to the mode-collapse manifested in CGAN when trained with insufficient data.
In contrast, in the extreme small sample regime the images synthesized by our model can hardly be distinguished from real images by both scores, suggesting superior generalization ability in this regime. 

\begin{figure}[hbt]
\captionsetup[subfigure]{labelformat=empty}
\centering
     \begin{subfigure}[]{0.5\textwidth}
    \hspace{-0.3cm}
     \begin{subfigure}[]{0.49\textwidth}
             \caption{\textbf{FID} $\downarrow$}

         \includegraphics[width=1\textwidth]{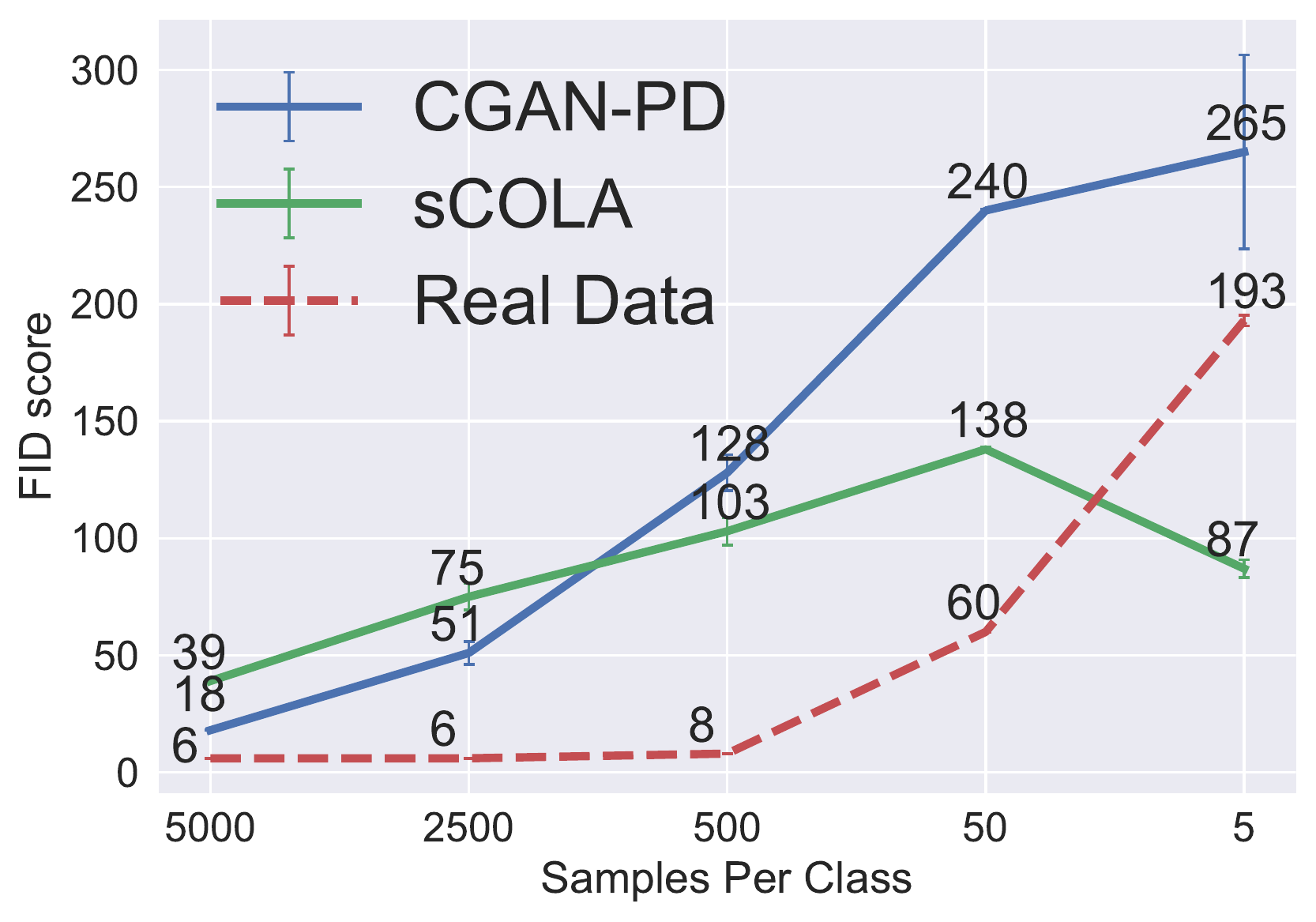}
     \end{subfigure}
     \begin{subfigure}[]{0.49\textwidth}
     \caption{\textbf{CAS} $\uparrow$}
         \includegraphics[width=1\textwidth]{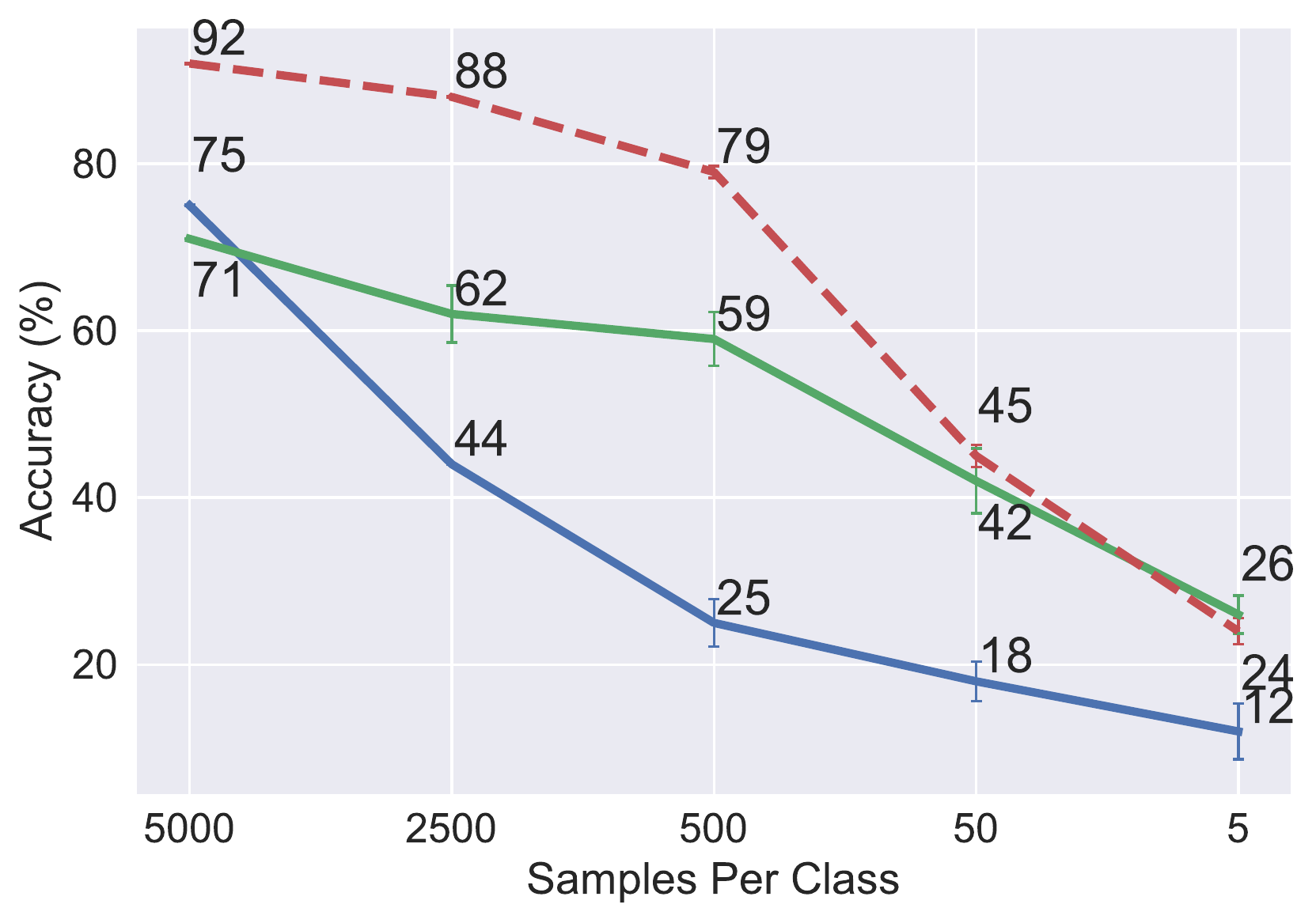}
     \end{subfigure}
     \vspace{-0.2cm}
    \caption{\scriptsize{CIFAR-10}}
    \vspace{0.3cm}
     \end{subfigure}
    \begin{subfigure}[]{0.5\textwidth}
    \hspace{-0.3cm}
     \begin{subfigure}[]{0.49\textwidth}
              \includegraphics[width=1\textwidth]{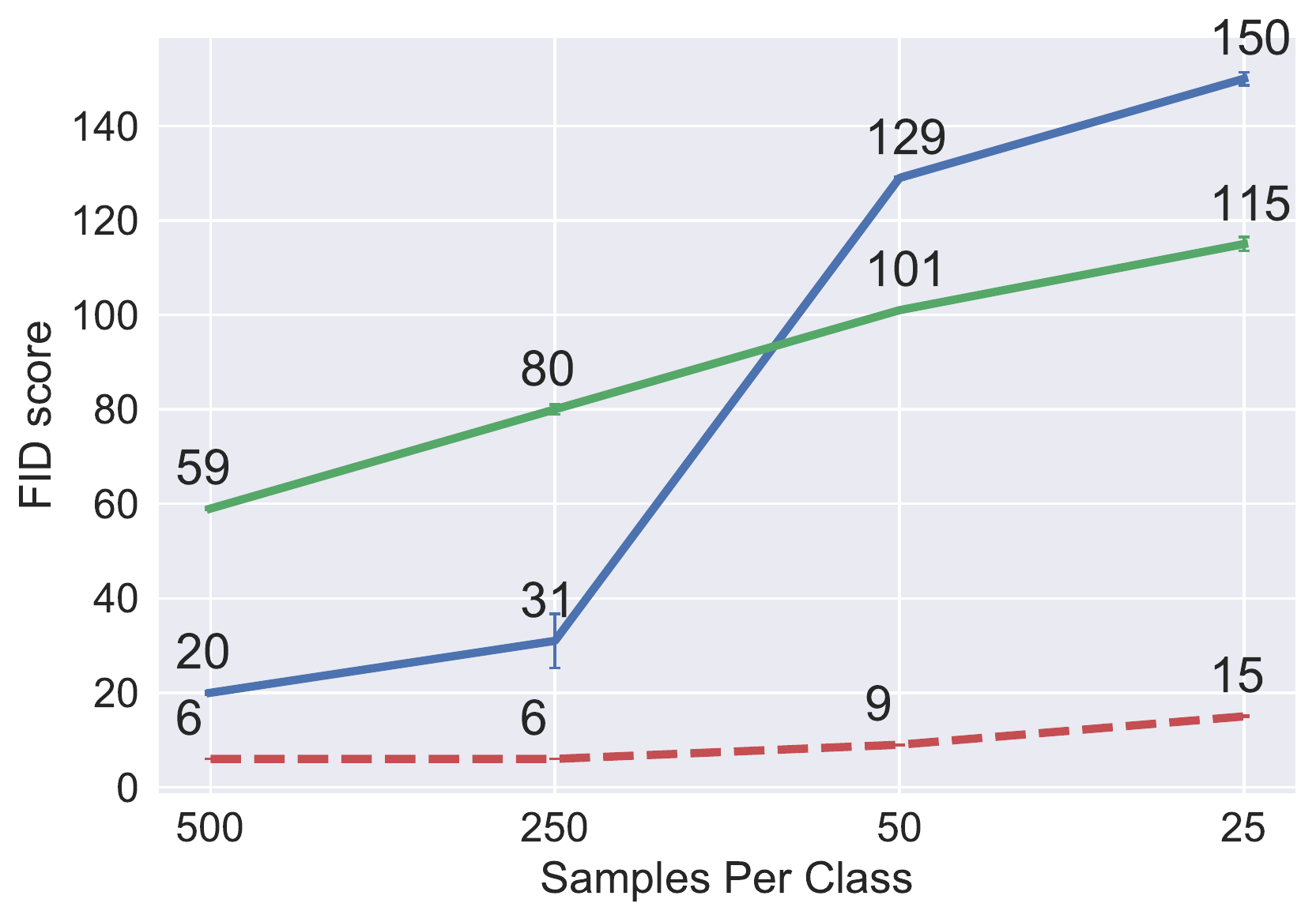}
     \end{subfigure}
     \begin{subfigure}[]{0.49\textwidth}
              \includegraphics[width=1\textwidth]{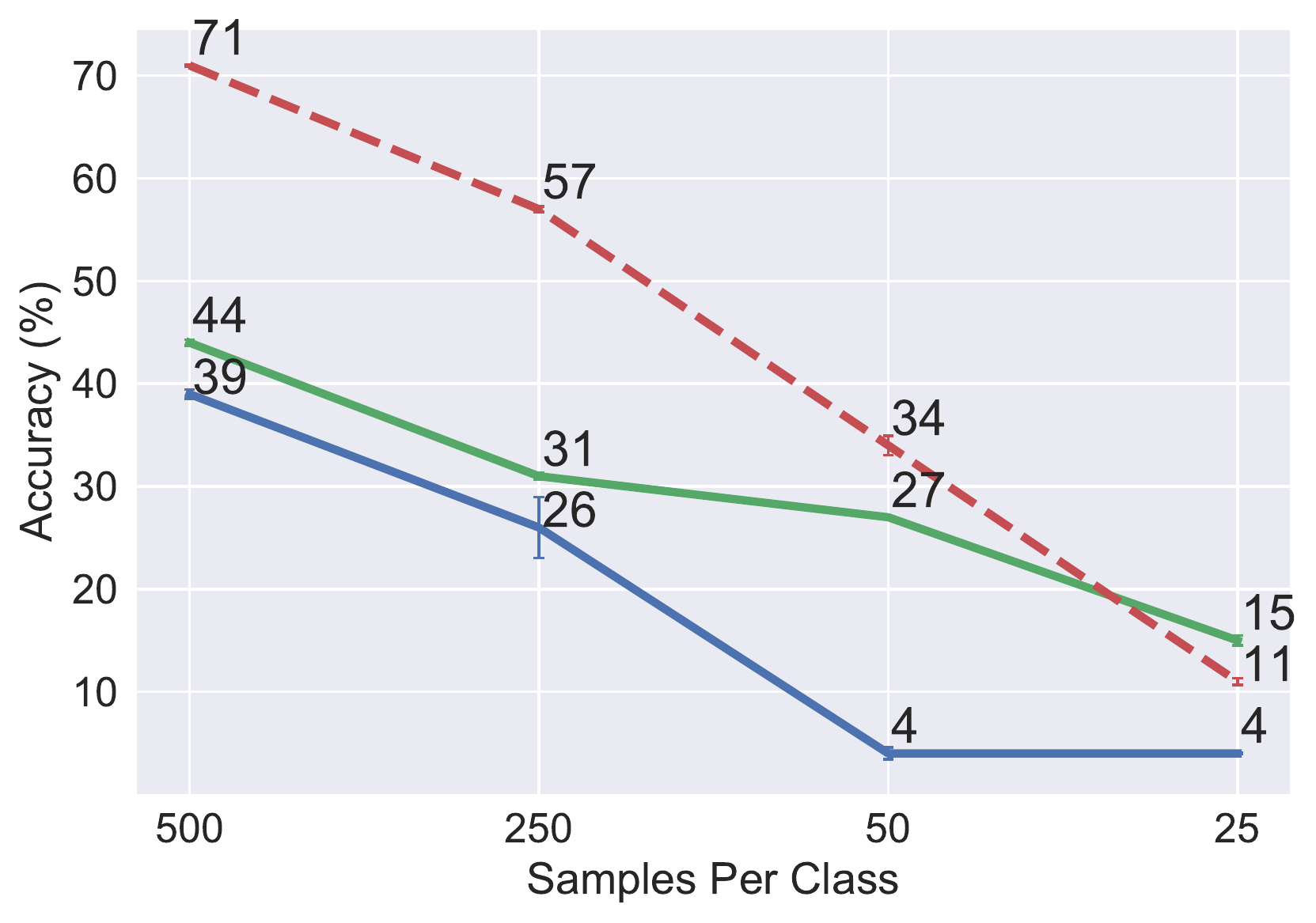}
     \end{subfigure}
     \vspace{-0.2cm}
    \caption{\scriptsize{CIFAR-100}}
    \vspace{0.3cm}
     \end{subfigure} 
          \begin{subfigure}[]{0.5\textwidth}
     \hspace{-0.3cm}
     \begin{subfigure}[]{0.49\textwidth}
              \includegraphics[width=1\textwidth]{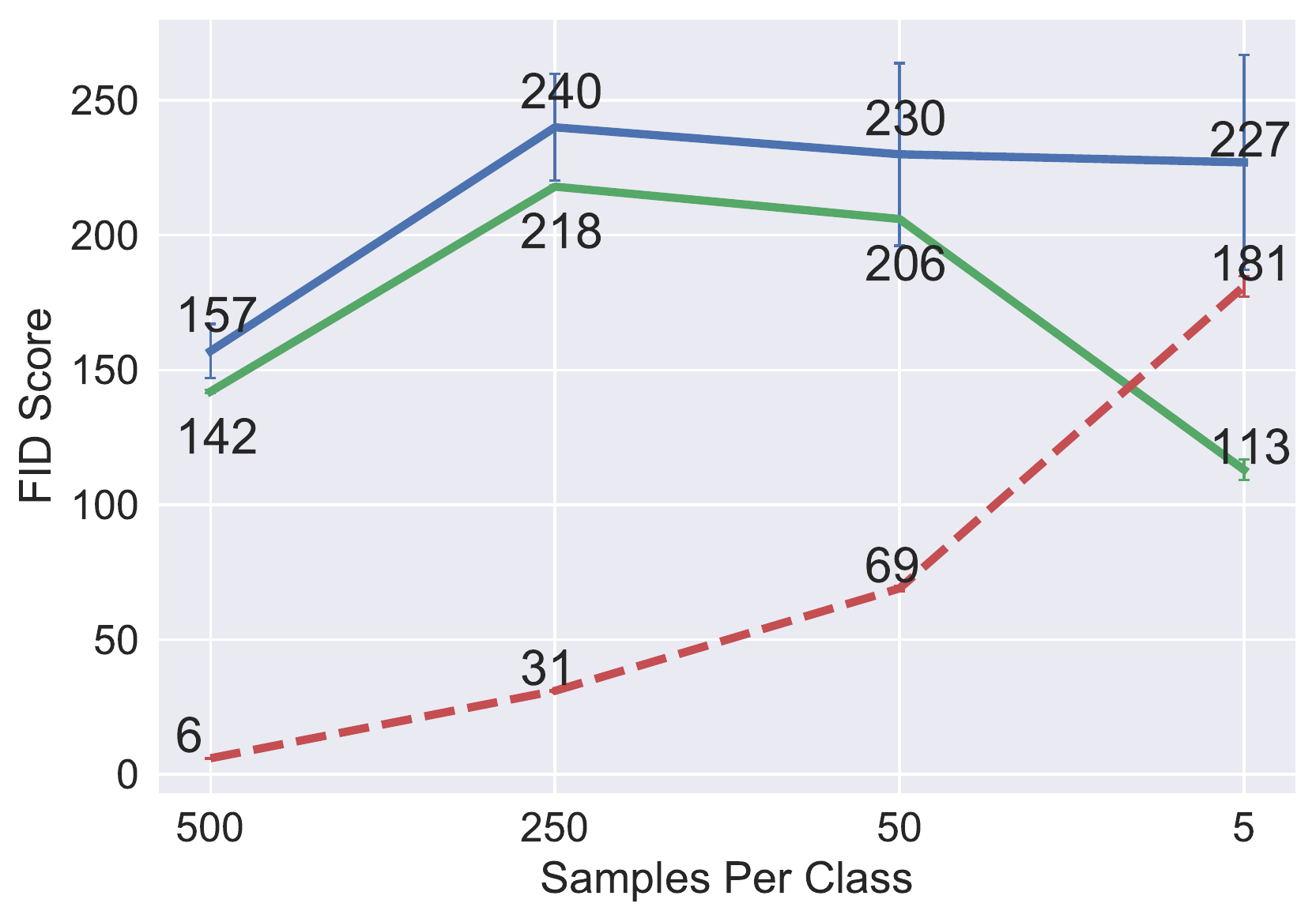}
     \end{subfigure}
     \begin{subfigure}[]{0.49\textwidth}
              \includegraphics[width=1\textwidth]{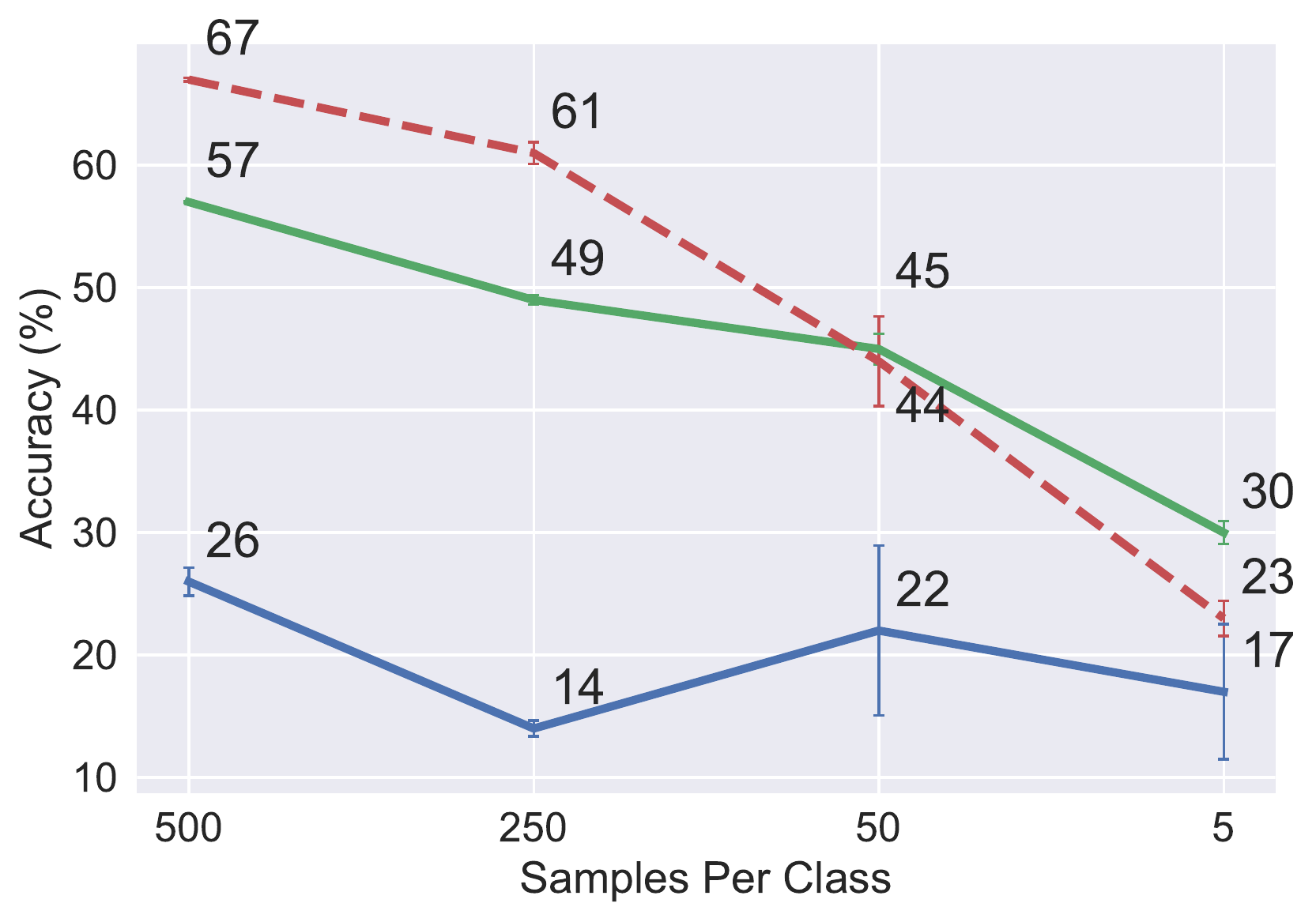}
     \end{subfigure}
     \vspace{-0.2cm}
    \caption{\scriptsize{STL-10}}
     \end{subfigure} 
        \caption{FID (left) and CAS (right) scores on CIFAR-10, CIFAR-100 and STL-10 with varying training sample sizes. sCOLA's generated images achieve better scores than the GAN's images in the small sample regime, and even achieve better scores than real images when data is extremely scarce (see also Section~\ref{sec:small-sample}).}         
        \label{fig:full supervised}
        \vspace{-0.5cm}
\end{figure}

\begin{figure}[ht]
\hspace{-0.5cm}
		\includegraphics{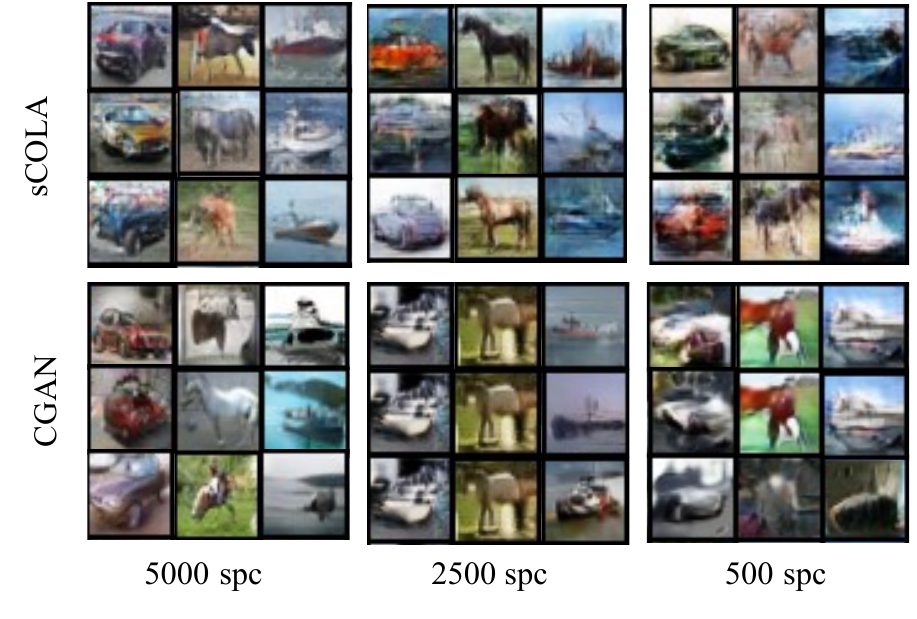}

\vspace{-0.5cm}
	\caption{qualitative comparison between sCOLA (top) and CGAN (bottom) trained on CIFAR-10 with varying numbers of samples per class (spc). Each column corresponds to a different class in the data. CGAN evidently suffers from mode-collapse when given insufficient data for training.}
	\label{fig:qualitative_gan_scola}
	\vspace{-0.5cm}
\end{figure}


\section{Classification from Small Sample}
\label{sec:small-sample}

In this section we show the benefits of using our method in the small sample regime, where only a small sample is available to train the classifier, and \textbf{no external information can be used}. We will show that using our model to augment the small training set significantly improves the performance of a deep network classifier trained on this data. 

\mypar{Classification approach.}
sCOLA is first trained on the small training sample, and then used to generate novel samples from each class. The synthetic images are then combined with the real images, resulting in an extended training set (termed "Mix") that consists of 50\% real images, and 50\% synthetic images generated by our model. This extended set is then used to train a CNN classifier. For comparison, we train the same CNN classifier with the original images, making sure that both methods see the same subset of images with an identical training procedure.

\subsection{Methodology}

\begin{figure*}[ht]

	\centering
		\includegraphics[width=0.49\linewidth]{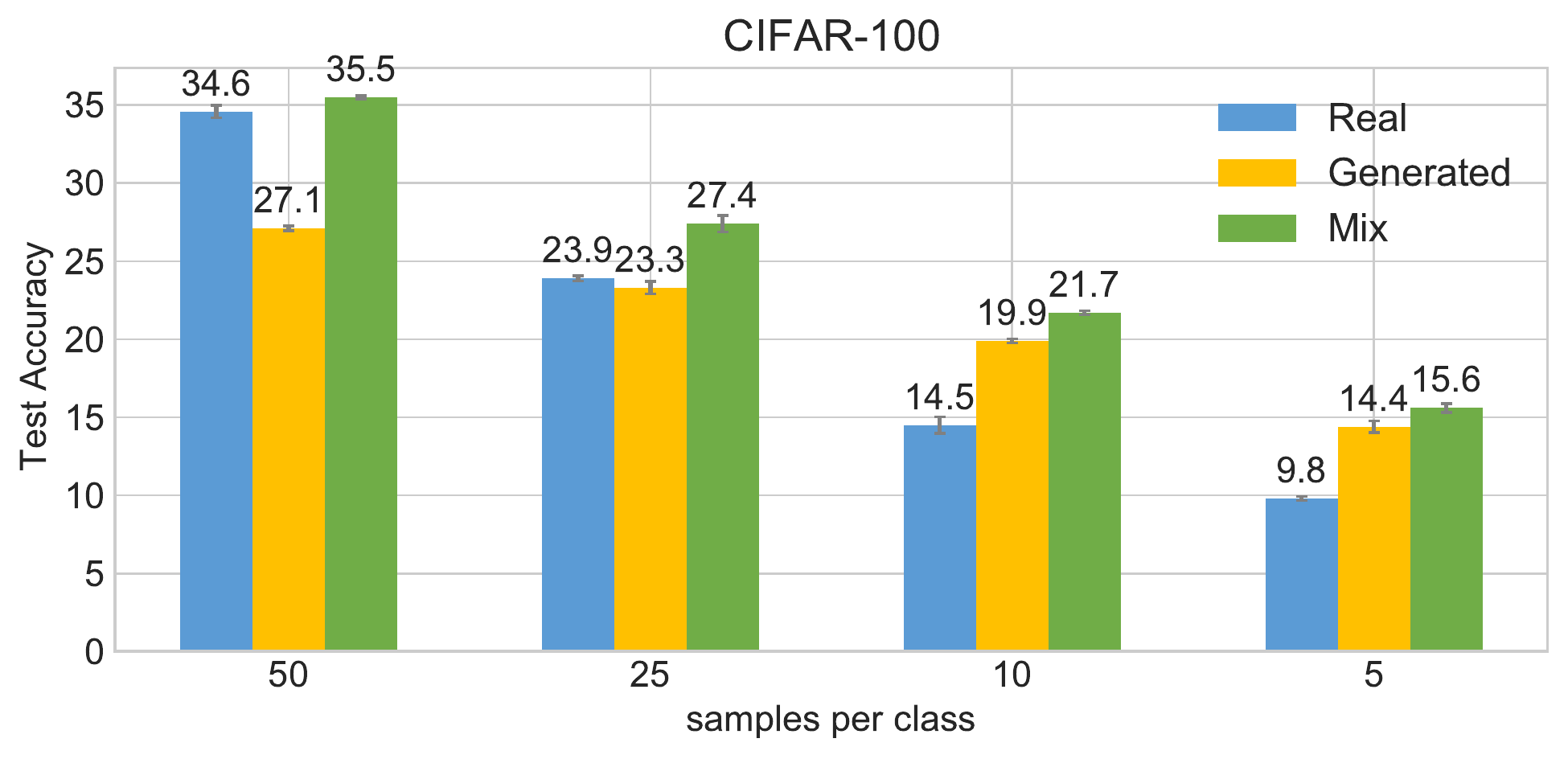}
		\includegraphics[width=0.49\linewidth]{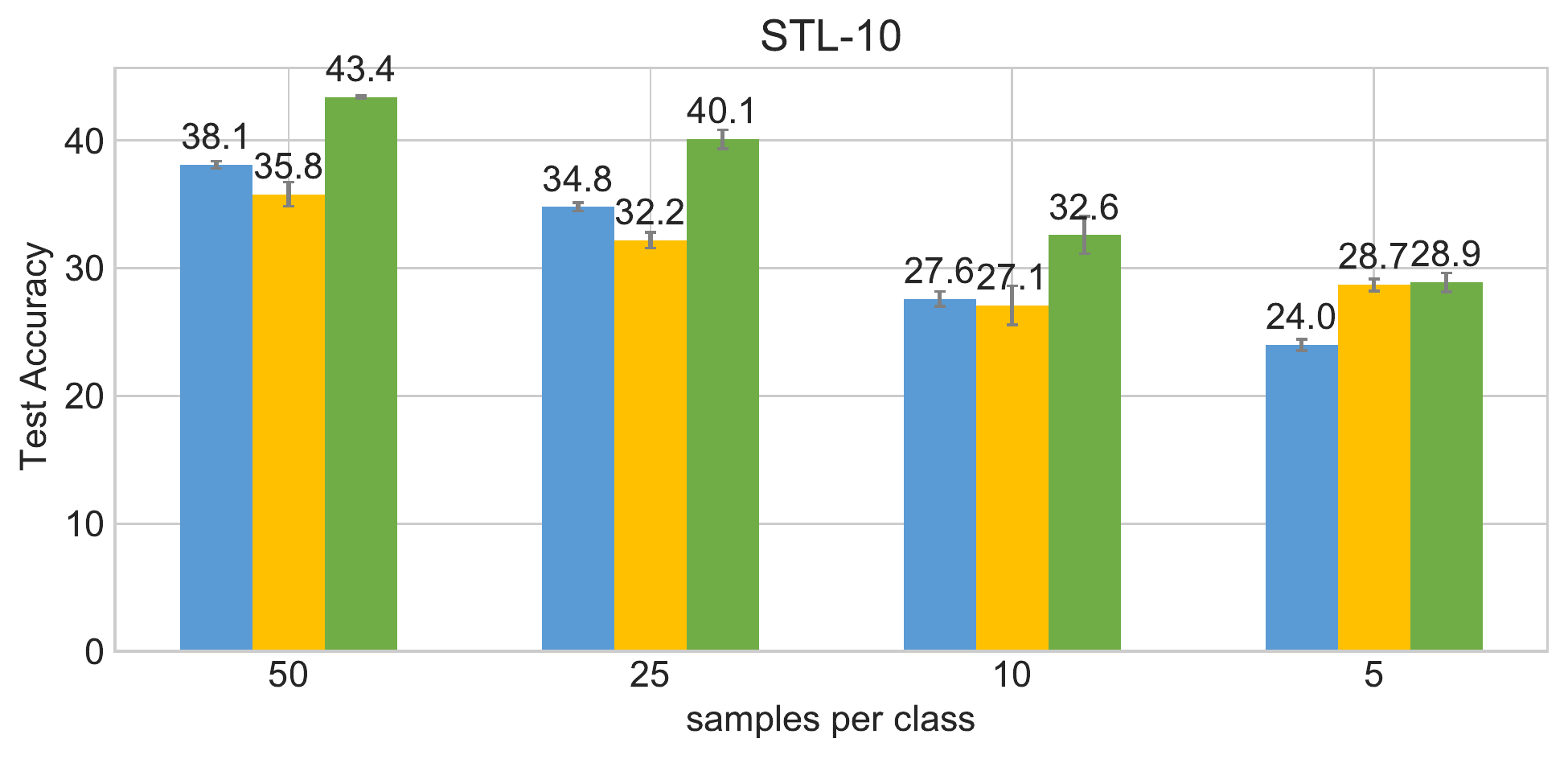}
	\caption{Classification accuracy for CIFAR-100 (left) and STL-10 (right) with varying sample size per class.}
	\label{fig:ss both}
\end{figure*}

\begin{table}[h]
\vspace{-0.5cm}
\footnotesize
\begin{center}
\begin{tabular}{l|ccc}
\thead{\bf Name} & \thead{\bf Classes} & \thead{\bf SPC \\ \bf Train/Test} & \thead{\bf Dimension}\\
										   
\hline
\rule{0pt}{3ex}
\hspace{-.5ex} \thead{CIFAR-10} \cite{krizhevsky2009learning}       & 10    & 5000 / 1000   & \small$32\times32\times3$\\
\thead{CIFAR-100}  \cite{krizhevsky2009learning}     & 100   & 500 / 100   & \small$32\times32\times3$\\

\thead{STL-10 \footnotemark   \\ \tiny{(downsampled, labeled only)} } \cite{coates2011analysis}       & 10   & 500 / 800   & \small$48\times48\times3$\\

\thead{Tiny ImageNet} \cite{le2015tiny}      & 200   & 500 / 50   & \small$64\times64\times3$\\
\end{tabular}
\end{center}
\vspace{-0.5cm}
\caption{Datasets used in our experiments.}
\label{table:datasets}
\vspace{-0.15cm}
\end{table}

\footnotetext{Unlike most generative models trained on STL-10, in this work we only use the labeled images, and discard the 100K unlabeled images.}

The datasets we use are described in Table~\ref{table:datasets}. For each dataset we train our method with various sample sizes, ranging from 100 samples per class (spc) to as low as 5 spc. For each sample size we run our model on 3 random samples of the same size, and evaluate the classifier's accuracy on the original held-out test set of the data. 
In order to isolate the contribution of our approach from other factors, we fix the classifier's architecture to an off-the-shelf ResNet-20 \cite{he2016deep} for all datasets except for Tiny ImageNet, which, due to its larger size, resolution and number of classes, necessitates the use of a larger network. Consequently we use the same WRN-16-8 \cite{zagoruyko2016wide} network as CFVAE-DHN (excluding DHN initialization). Full implementation details can be found in Appendix~\ref{supp: ss classification implementation}.

\subsection{Empirical Results}

We compare our model trained on CIFAR-10 and Tiny ImageNet with the best published results reported in \cite{lin2020efficient}. A short description of these methods can be found in Section~\ref{ss related work}. The results are summarized in Table~\ref{table:ss benchmark}. Our method achieves the best results across all sample sizes on both datasets.

\begin{table}[ht]

\begin{center}

\footnotesize{\textbf{CIFAR-10}\\}\vspace{0.1cm}
\begin{tabular}{ l| c c c c c}

\toprule

& 100& 50 & 20 & 10 &5  \\
 \midrule 
 
 DADA& \ttc{48.32}{.23}& \ttc{40.48}{.57}    & \ttc{30.44}{.37}  &  \ttc{21.67}{.58} & -\\
 
Tanda & \ttc{45.17}{1.84}  & \ttc{39.16}{1.18}    &  \ttc{29.84}{1.23}  & \ttc{20.18}{.73} & -  \\

CFVAE-DHN  &  \ttc{55.58}{.12}  &    \ttc{52.06}{.36}&  \ttc{32.65}{.38} &  \ttc{34.11}{.67}  &     - \\

\textbf{sCOLA}  &  \ttc{\textbf{ 58.59}}{0.58}  & \ttc{\textbf{54.51}}{0.22} & \ttc{\textbf{49.63}}{1.29} & \ttc{\textbf{42.86}}{2.04} & \ttc{\textbf{ 29.05}}{1.09} \\

\vspace{0.3cm}
\end{tabular}

\textbf{Tiny-ImageNet\\}\vspace{0.1cm} \hspace{-1.2cm}
\begin{tabular}{ l|cccc }

 \toprule

 & 100 & 50  & 20    & 10  \\
 \midrule 
 
DADA & \ttc{17.64}{.82} & \ttc{14.97}{1.08} & \ttc{10.13}{2.04}  & -    \\

Tanda  & \ttc{27.07}{.94} & \ttc{17.95}{.59} &  \ttc{13.92}{.59}  & -    \\
 
CFVAE-DHN & \ttc{\textbf{35.97}}{.35} & \ttc{28.82}{.79}  & \ttc{21.37}{.29}  &  - \\
 
\textbf{sCOLA} & \ttc{\textbf{35.24}}{.34} &  \ttc{\textbf{29.70}}{.05}  & \ttc{\textbf{23.99}}{.52}  & \ttc{\textbf{17.14}}{.27}   \\

\end{tabular}

\end{center}
\vspace{-0.5cm}
\caption{Classification accuracy for \textbf{CIFAR-10} (top) and \textbf{Tiny-ImageNet} (bottom). Each column corresponds to a different sample size per class. The architecture used by our method is smaller or similar to the ones used by the other methods (see methodology).}
\label{table:ss benchmark}
\vspace{-0.5cm}
\end{table}

Additionally, we expand our experiments to datasets with no published results to date on small sample classification tasks. For these datasets, we show that when a classifier is trained on a mixed dataset consisting of real and synthetic images, it yields better results compared to those obtained when being trained only on the real images or only on the synthetic images. This suggests that our model succeeds in learning the data distribution well enough, and can subsequently generate novel samples that do not exist in the real data. Fig.~\ref{fig:ss both} shows results on CIFAR-100 and STL-10.

\subsection{Ablation Study}
An ablation study probing different components of our method, including image similarity measure,  architecture of the generator and classification network, can be found in Section~\ref{supp: ablation_sec} in Supp. In this regard, we found that the size of the classifier has little effect on the overall performance when dealing with very small samples.

\section{Summary and Discussion}

We described a novel unsupervised non-adversarial generative model that is capable of generating diverse multi-class images. This model outperforms previous non-adversarial generative methods, and outperforms more complicated GAN models when the training sample is small. 

Unlike GAN models, our model is characterized by stable and relatively fast training, it is relatively insensitive to the choice of hyper-parameters, and it has control over each class variance in the synthesized dataset.  
Furthermore, empirical results show that our method is robust to the risk of mode-collapse, which plagues most GAN models when trained with insufficient data.

We further demonstrated the capability of our model to augment small data for classification, advancing the state-of-the-art in this domain. Since our method is only used to augment the small sample, it remains orthogonal to future advances in algorithms devised for small training sets.


\section*{Acknowledgements}
This work was supported in part by a grant from the Israel Science Foundation (ISF) and by the Gatsby Charitable Foundations.

\nocite{arjovsky2017wasserstein, kodali2017convergence, berthelot2017began}

{\small
\bibliographystyle{ieee_fullname}
\bibliography{main}
}

\setcounter{section}{0}
\renewcommand\thesection{\Alph{section}}

\title{Appendix}
\author{}
\date{}
\maketitle
\vspace{-2cm}\section{The FID score is inadequate for multi-class datasets}\label{supp: fid inadequacy}

In this section we will show that the FID fails to reveal intra-class variance, highlighting its inadequacy to serve as a single metric for assessing generative models on multi-class data. To do so, we will use our model to construct two datasets that obtain similar FID scores, but exhibits an apparent difference in terms of the intra-class variance. 
We notice that generating images from latent codes that reside in proximity yields images that are visually and semantically similar. On the other hand,  sampling  latent codes that come from the same latent cluster but with a greater distance from each other,  yields far more diverse outputs under the model. An example of this effect is illustrated in Fig.~\ref{supp: sampling}

\begin{figure*}[ht]
    \begin{center}

    \includegraphics[]{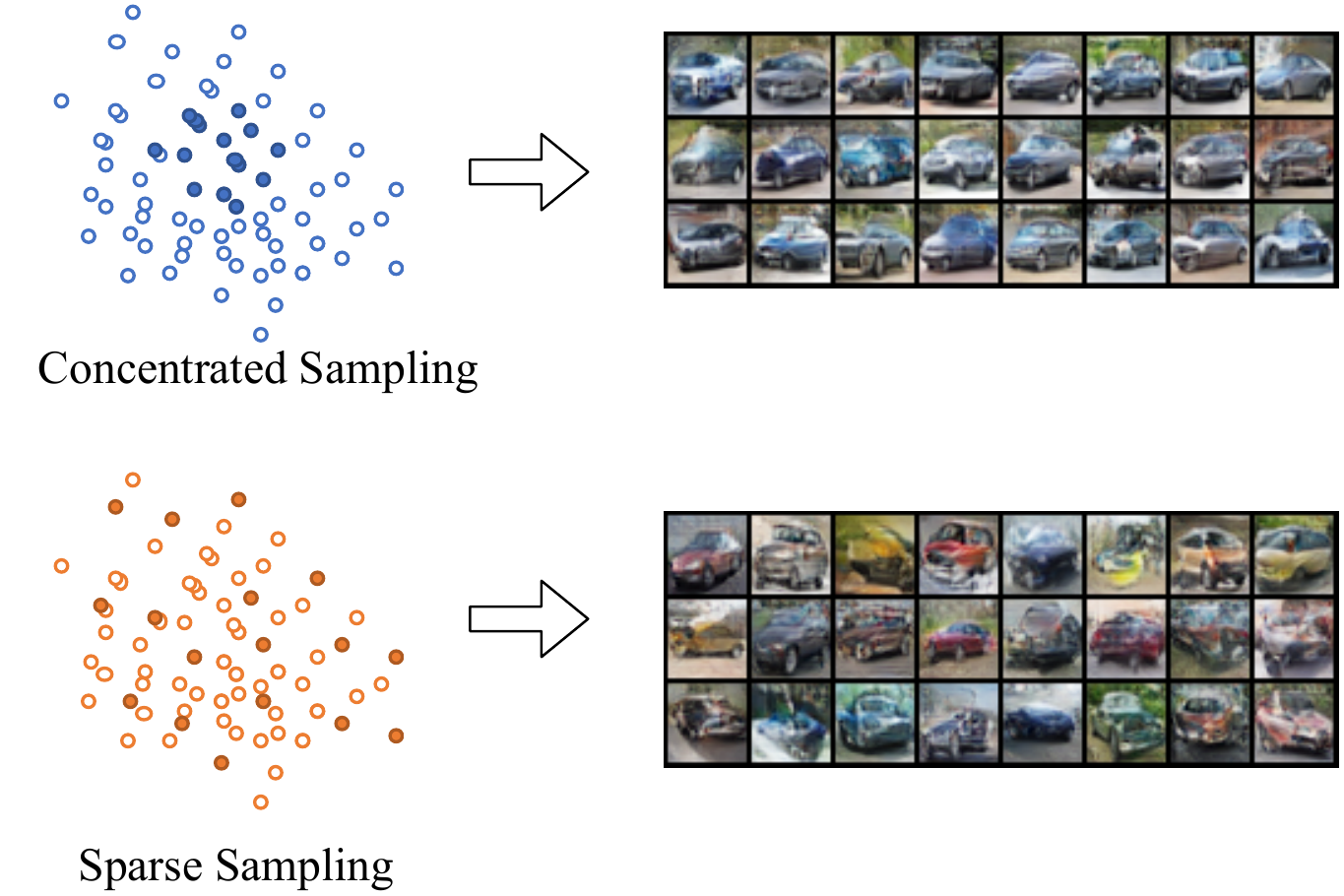}
    \end{center}
      \caption{The effect of the scattering of latent codes on the generated images. in the top row, latent codes that are sampled around the cluster means result in similar images with small intra-class variation. In the bottom row, latent codes that are sparsely sampled result in images that exhibit a greater intra-class variance. }
      \label{supp: sampling}
    
\end{figure*}

Consequently, we can generate two versions of synthesized datasets- one where each class is generated from concentrated latent codes, and one where each class is generated from sparsely sampled latent code from the same cluster. The first dataset will consist of homogeneous classes, exhibiting a small intra-class variance whereas the second one will hold classes with a larger variety of objects.

In this experimental setting, we use our model to generate two synthetic versions of CIFAR-10- one which we will term 'concentrated' which is made of generations sampled from latent codes concentrated around the cluster means, and a second dataset, termed 'sparse' which is based on sparsely sampled latent codes from the same cluster.
We then evaluate these synthetic datasets using the FID and CAS scores.results are presnted in Fig.~\ref{supp: fid cas discrepancy}

Since the 'concentrated' dataset has low intra-class variation, each class consists of similar images, which makes it an inferior dataset for training a classifier (as is evident by the low accuracy obtained by a classifier trained on this data). On the other hand, the 'sparse' dataset is characterised by a higher intra-class variance, with diverse images in each class, yielding an effective training set for classification. Nevertheless, the FID scores of the two datasets are barely affected by these differences, since it is based on a single multivariate Gaussian approximation of their activations in the penultimate layer of the Inception network, which cannot capture intra-class variance.

        \begin{figure}[ht]
        \begin{center}
        \includegraphics[scale=0.5]{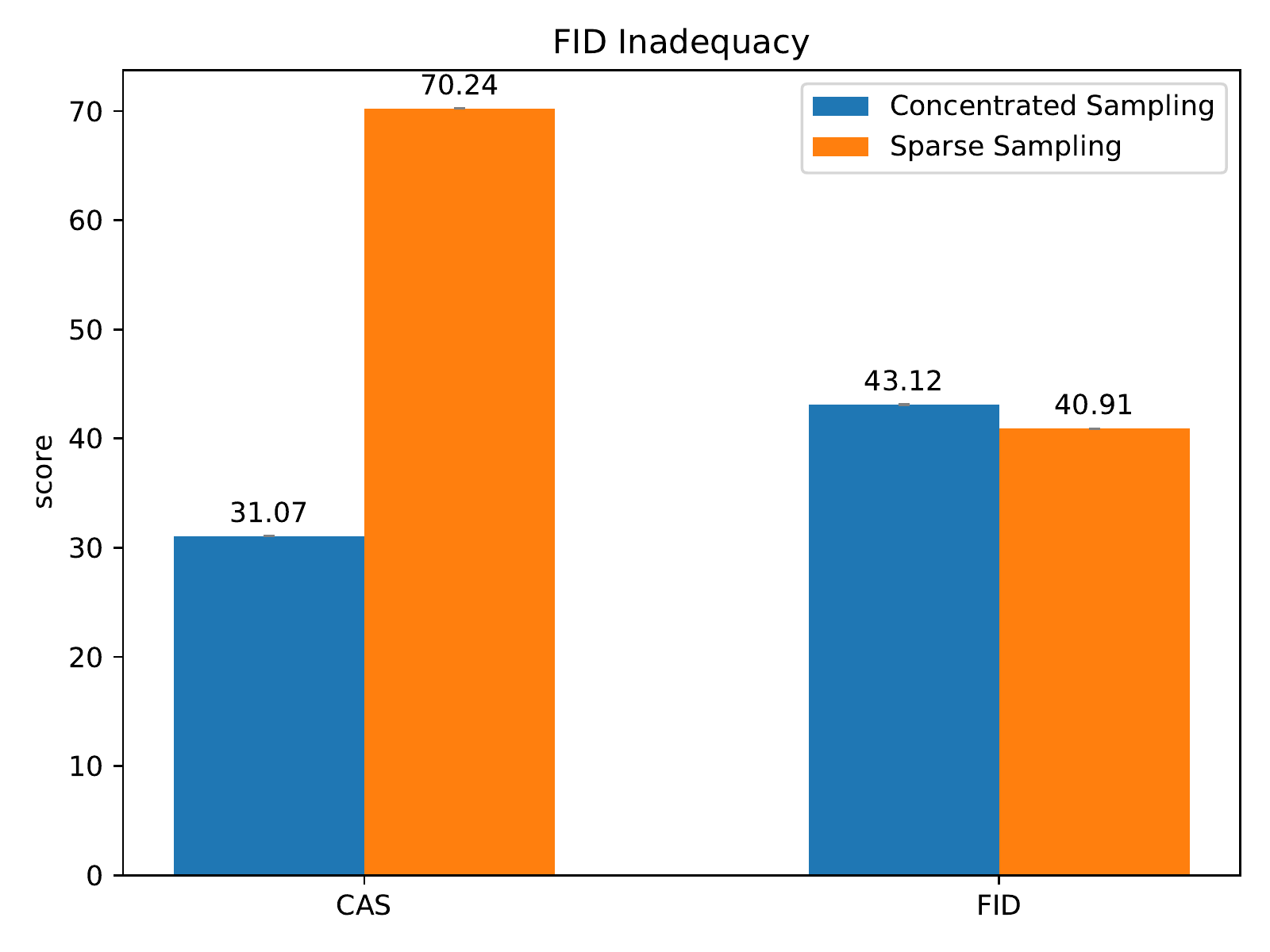}
        \end{center}
        \caption{While the CAS score is an informative measure of the intra-class variance, the FID fails to discriminate between the two datasets. }
        \label{supp: fid cas discrepancy}
        \end{figure}



    \begin{algorithm}[ht] 
            \caption{Clustering the Latent Space} 
            \begin{flushleft}
            
                \textbf{INPUT:} Unlabeled dataset $\{x_i\}_{i=1}^N$\\
                 \hspace{33pt} $K$ - number of clusters\\
                \hspace{33pt} $E_\theta$ - dual-head CNN Encoder with parameters $\theta$ \\
                \hspace{33pt} $E_\theta^1(x) \in \mathbb{R}^K ;  E_\theta^2(x) \in \mathbb{R}^4$ \\
                \hspace{33pt} $\{\xi : X \rightarrow X\} $ random augmentation functions\\
               
                \hspace{33pt} $\sigma$ - intra-class variance\\
                \hspace{33pt} $\lambda_e$ - learning rate at epoch e\\
                \textbf{OUTPUT:} a matching  $\{(x_i, t_i)\}_{i=1}^N$    such that  $\{t_i\}_{i=1}^N$ are clustered according to their matching   $x_i$
            \end{flushleft}
            \begin{algorithmic}

            \For {$i$=1 to $N$}
            \Comment{initialize a random matching}
            \State sample $c \sim Categ(\frac{1}{K}, ... \frac{1}{K})$ 
            \State sample $v \sim \mathcal{N}(\vec{\mu_c}, \sigma I_{K\times K})$
            \State $t_i \leftarrow \frac{v}{||v||_2}$ , $A  \leftarrow \{(x_i, t_i)\}_{i=1}^N$
            \EndFor

            \For {\textit{e=1...epochs}}
            \For {\textit{i=1...iters}}
            
            \State sample batch $(X^b, Z^b)$
             
            \State solve the assignment problem for $\{(x_i, t_i)\}_{i=1}^b$ 
            \State using the Hungarian Algorithm

            \State $$ \pi^* \leftarrow  \argmin_{\pi : [b] \leftrightarrow [b]} \sum_{j} ||E_\theta^1(x_j) - t_{\pi(j)}||_2^2  $$
            
            \State  $$ \theta \leftarrow \theta - \lambda_e \nabla_{\theta} \sum_{j} \sum_{\xi}||E_\theta^1(\xi(x_j)) - t_{\pi^*(j)}||_2^2 $$
             
            \State  $\forall i \in b \quad A(x_i) = t_{\pi^*(i)}$
            \Comment{update assignments}

            \EndFor
            \For {\textit{i=1...iters}}
            \State sample batch $(X^b)$
            \Comment{rotate batch}
            \State $X_{R}^b \leftarrow \big\{(X^b)_r | r \in  \{0^{\circ}, 90^{\circ}, 180^{\circ}, 270^{\circ}\} \big\}$
            
            $$ \theta \leftarrow \theta - \lambda_e \nabla_{\theta} \frac{1}{|X^b_R|}\sum_{i \in b} l_{ce}(E_\theta^2(x_i), r(x_i)) $$
            \Comment{$l_{ce}$ is the cross-entropy loss}
            
            \EndFor
            \EndFor

        \end{algorithmic}
        \label{supp: clustering pseudocode}
    \end{algorithm}
    

 \section{Implementation details} \label{supp: implementation_sec}
 
\subsection{\textbf{Step I - Clustering the latent space.}}\label{supp: step1 implementation}
For all experiments we use a ResNet-18 \cite{he2016deep}  network for the encoder.
The network is trained with SGD with an initial learning rate of 0.05 and momentum of 0.9 for 200 epochs. Learning rate is decayd by a factor of 0.5 every 50 epochs.
Training is done sequentially where an epoch optimizing the target assignment problem is followed by an epoch optimizing the rotation prediction problem. In both cases we use a batch size of 128, where in  the target assignment problem images are augmented by cropping, flipping and color jitters, and in the rotation prediction task each image is rotated in all orientations, yielding a batch size of 512. 
Weight decay regularization of 0.0005 is used on all datasets.

\subsection{\textbf{Step II - Image generation.}}\label{supp: step2 implementation}
In all our experiments we used the ADAM optimizer \cite{kingma2014adam}, with an initial learning rate of 0.01 for the latent code, and 0.001 for $G_\theta$.
The generative model was trained for 500 epochs, learning rate was decayed by 0.5 every 50 epochs. The only parameters that change throughout our experiments are the choice of architecture for the generator, the choice of reconstruction loss and the dimensionality of the latent space as follows: 

\begin{enumerate}
    \item \textbf{Small-Sample}: In this section, the generative function $G_\theta$ shares the same CNN generator architecture used in InfoGAN \cite{chen2016infogan} (which is also the architecture used in GLO \cite{bojanowski2017optimizing}). The dimension of the latent space was set to $\mathcal{Z} \subset \mathbb{R}^{K+d}$  where K is the number of classes in the dataset, and $d=64$ for all datasets. 
    $\mathcal{L}_{rec}$ is implemented using the unsupervised Laplacian-Pyramid loss Eq.~\ref{eqn:lap_loss}. An ablation study of different similarity measures is presneted in Section~\ref{supp: perceptual ablation}.
    
    \item \textbf{Full Data}: In the supervised version  (sCOLA), the generative function $G_\theta$ shares the same CNN generator architecture used in CGAN \cite{miyato2018cgans}, while in the unsupervised framework (COLA) we use the generator of InfoGAN \cite{chen2016infogan}. In both versions $d=128$,  and $\mathcal{L}_{rec}$ is implemented using the perceptual loss Eq.~\ref{eqn:perceptual_loss}. 
\end{enumerate}

\subsection{\textbf{Small-sample classification}}\label{supp: ss classification implementation} For a fair comparison, we use the same training procedure on all data sizes. i.e. same batch size, number of epochs, iterations per epoch and learning schedule. 
   
ResNet-20 was trained for 180 epochs, with an initial learning rate of 0.1,  decayed by 0.5 every 30 epochs.
Whereas WRN-16-8 was trained for 200 epochs with an inital learning rate of 0.1, decayed by 0.2 every 60 epochs. Both networks were trained using SGD optimization with a batch size of 128.  An ablation study showing the effect of different architectural designs, is presented in Section~\ref{supp: classifier_arch}.
The classifier was trained using standard data augmentation with such image transformations as random flip and crop.

\subsection{\textbf{FID score implementation}} \label{supp: fid implementation}
For CIFAR-10 and CIFAR-100, FID scores were computed on a sample of 10K generated images against the default Test-set of size 10K. Each model was trained 3 times, and the final score was taken as the average over 10 random samples from each model. 

For STL-10, FID scores were computed on a sample of 8K generated images against the default Test-set of size 8K. Note that most generative models that had been evaluated on this dataset used the whole dataset, including the 100K unlabeled images. Therefore these previous results are not comparable to the experimental results reported here.

 \section{Ablation Study} \label{supp: ablation_sec}
 \subsection{Image similarity measure} \label{supp: perceptual ablation}
 Many generative models are trained using some form of reconstruction loss that is based on a similarity measure between the original image and the one reconstructed by the model. Since using the Euclidean distance in pixel space is highly inadequate  (similar objects may differ drastically in pixel space) alternative similarity measures that capture the perceptual relationship between images have been thoroughly investigated in recent years. In this context, it has become a common practice to use a perceptual loss based on a VGG network that was pre-trained on ImageNet for image synthesis tasks \cite{dosovitskiy2016generating, johnson2016perceptual, hoshen2019non}. Later works \cite{zhang2018unreasonable} have evaluated numerous similarity measures that are based on deep features of neural nets and concluded that they exhibit a strong correlation with human judgment even when these features where obtained in an unsupervised or self-supervised manner. Nevertheless, establishing a similarity measure without large amounts of data remains to this date an uncharted territory. In order to make our model applicable in the small-data regime, we seek a meaningful similarity measure that uses little to no data. Furthermore, while most works in this area have evaluated the similarity measure according to human judgment on image quality, we sought a measure that will also prove useful in downstream tasks. To this end, we have investigated various methods and evaluated them on the downstream task of image classification of the generated images. 
 Results on partitions of size 1\% of CIFAR-10 can be found in Fig.~\ref{supp:perceptual loss}. We found that the Laplacian Pyramid Loss used in \cite{bojanowski2017optimizing} yields the best results compared to other unsupervised measures.
 The loss functions that were compared are as follows:
 \begin{enumerate}
     \item \textbf{ImageNet VGG16} - The original perceptual loss described in Eq.~\ref{eqn:perceptual_loss} 
     \item \textbf{Laplacian Pyramid} - The Laplacian Pyramid Loss as described in Eq.~\ref{eqn:lap_loss}
     \item \textbf{k-means ResNet32} - Perceptual distance based on layers of ResNet32 initialized with stacked k-means as described in \cite{krahenbuhl2015data}
     \item \textbf{Random ResNet32} - Perceptual distance based on layers of ResNet32 initialized with random weights
     \item \textbf{CIFAR-10 ResNet32} - Perceptual distance based on layers of ResNet32 trained on CIFAR-10
      \item \textbf{L1} - The $\mathcal{L}_1$ loss in pixel space

 \end{enumerate}

\begin{figure}[ht]
	\centering
	\includegraphics[width=\linewidth]{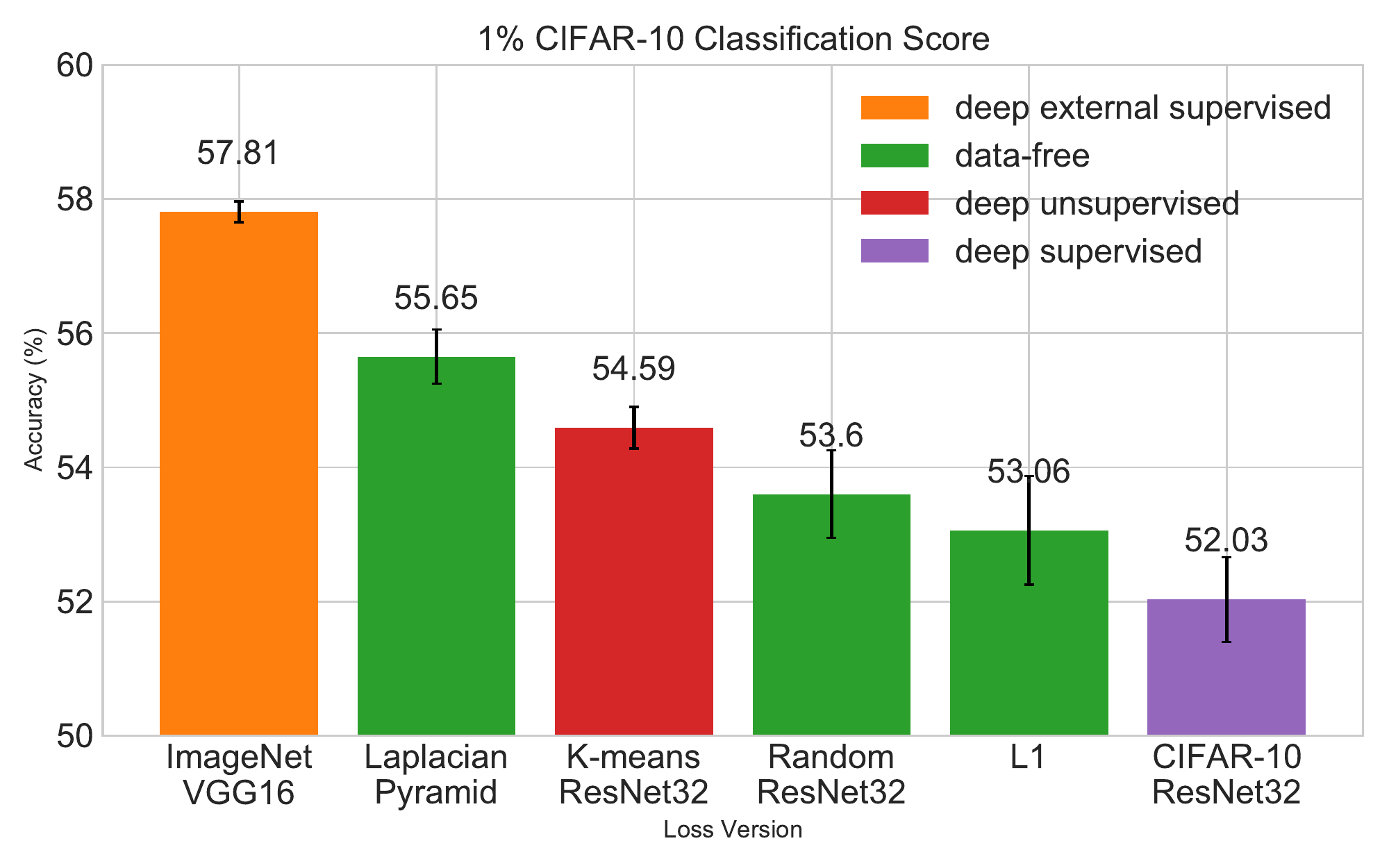}
	\caption{The effect of different similarity measures in the optimization of sCOLA when trained on on 1\% of the images in CIFAR-10. Classification score is obtained using the same framework described in Section~\ref{supp: ss classification implementation}}
	\label{supp:perceptual loss}
\end{figure}

\subsection{Small Data Classifier Architecture.} \label{supp: classifier_arch}
We experimented with various architectures to check whether larger networks are preferable over smaller ones in the small-sample regime. Our results, depicted in Table~\ref{supp: table: classifier_arch}, suggest that there is no notable advantage in using deeper and larger nets, and that smaller nets seem to perform just as good, with the added benefit of shorter training time. 

{\renewcommand{\arraystretch}{1.5}
\begin{table}[ht]

\begin{center}

\begin{tabular}{ c c c}

 \toprule

Architecture  & \# Params & Accuracy  \\
 \midrule

ResNet-20 & 0.27M & \ttc{59.03}{.58}    \\

ResNet-32 & 0.46M  & \ttc{58.4}{.62} \\
 
WRN-28-10  & 36.5M  &    \ttc{59.6}{.99} \\

 \bottomrule
\end{tabular}
\end{center}
\caption{Classification accuracy of different networks on a mixed dataset of images generated by our model and real images from CIFAR-10 with 100 samples per class}
\label{supp: table: classifier_arch}
\end{table}
}

\subsection{Generator Architecture}
\label{supp: gen_arch}
We tested many commonly used generative network architectures, and assesed their impact on the model's performance. The models that were evaluated are thus:
\begin{enumerate}
    \item \emph{\textbf{InfoGAN}} with transposed convolutions
    \item \emph{\textbf{DCGAN}}  with residual blocks and upscaling convolutions
    \item \emph{\textbf{CGAN}} with residual blocks, upscaling convolutions and conditional Batch-Norm 
\end{enumerate}
Note that we use the GAN models name for reference only, as our method uses these architectures in a non-adversarial approach, with no discriminator.
The differences of the above are summarized in Table~\ref{supp: arch_compare}, and the results are given in Table~\ref{supp: gen_arch_results}.
Implementation of all of the above was conducted according to \cite{lee2020mimicry}.

{\renewcommand{\arraystretch}{1.5}
\begin{table}[ht]

\begin{center}
\begin{tabular}{ c c c c c}

 \toprule

Arch  & \# Params & Residual & Upscaling & Batch-Norm   \\
 \midrule

\emph{InfoGAN} & 8.6M & \xmark &  \thead{transposed \\ convolution} & none \\

\emph{DCGAN} & 4.1M  & \cmark & \thead{bilinear \\ upsampling} & global \\
 
\emph{CGAN}  & 4.1M  & \cmark &   \thead{bilinear \\ upsampling} &   conditional  \\

 \bottomrule
\end{tabular}
\end{center}
\caption{Design differences between evaluated architectures}
\label{supp: arch_compare}
\end{table}
}

{\renewcommand{\arraystretch}{1.5}
\begin{table}[ht]

\begin{center}
\begin{tabular}{ c c c c c}

 \toprule

Architecture   & FID $\downarrow$ & CAS $\uparrow$ & CAS-Test   \\
 \midrule

\emph{InfoGAN}   & \ttc{49.38}{.32}  &  \ttc{67.65 }{.26} &  \ttc{85.14 }{.66}  \\

\emph{DCGAN} & \ttc{85.94}{2.14} & \ttc{61.48}{.23} & \ttc{45.14}{.74} \\
 
\emph{CGAN}  &\textbf{ \ttc{39.49}{.20}}&   \textbf{ \ttc{70.66}{.91}} &   \ttc{85.61}{.46}  \\

 \bottomrule
\end{tabular}
\end{center}
\caption{Results were obtained on sCOLA trained on the full train set of CIFAR-10. The conditional batch norm had a notable effect on the quality of the model's generations.}
\label{supp: gen_arch_results}
\end{table}
}

\section{Qualitative comparison}

\begin{figure*}[ht]
\vspace{-0.65cm}
	\caption{visualization of generations of CGAN (left) and sCOLA (right) trained on CIFAR-10 with varying samples per class (spc). While CGAN suffers from mode-collapse when training data drops, our method maintains a an intra-calss diversity that better matches the original data.}
		\label{supp:fig:cgan_vs_scola_qualitative}
		\vspace{-0.8cm}
	\centering
    \includegraphics{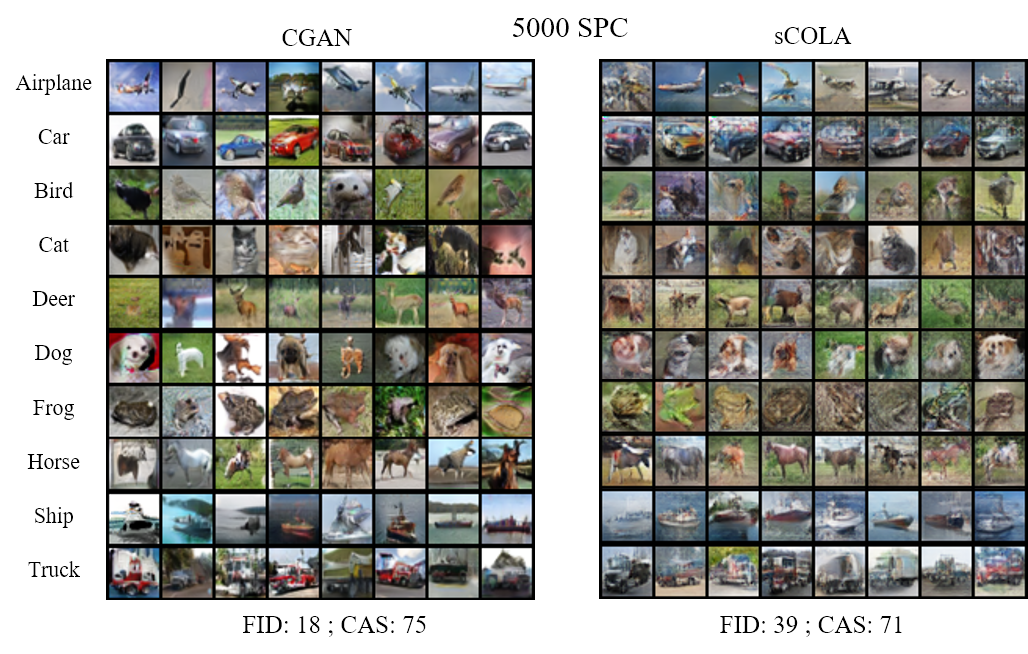} 
    
\end{figure*}

\begin{figure*}[ht]
\vspace{-0.5cm}

	\centering

    \includegraphics{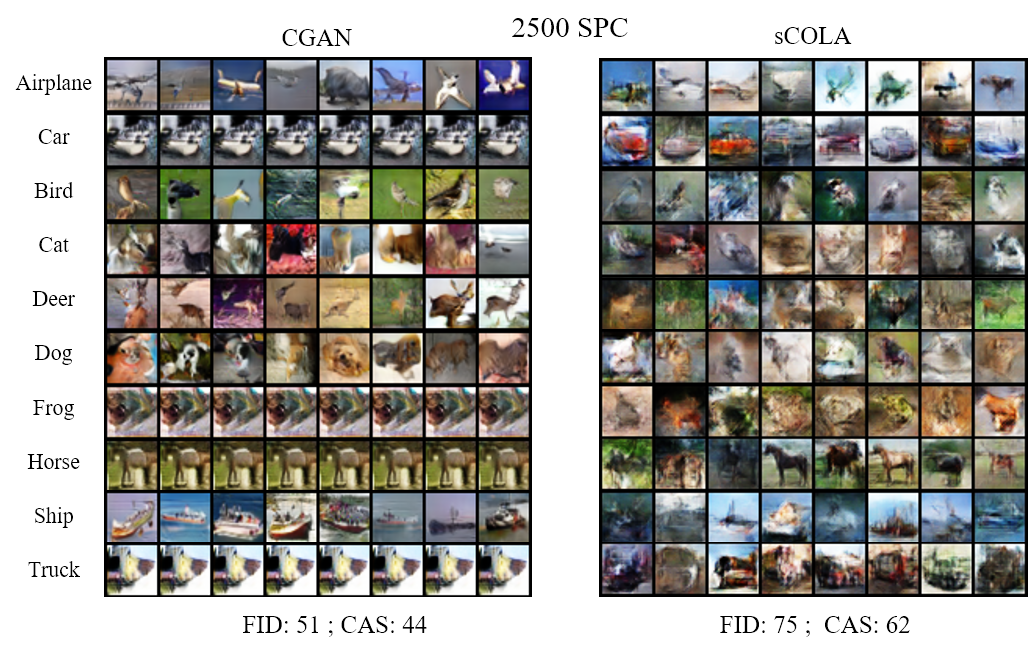} 
	
\end{figure*}
\begin{figure*}[ht]
	\centering

    \includegraphics{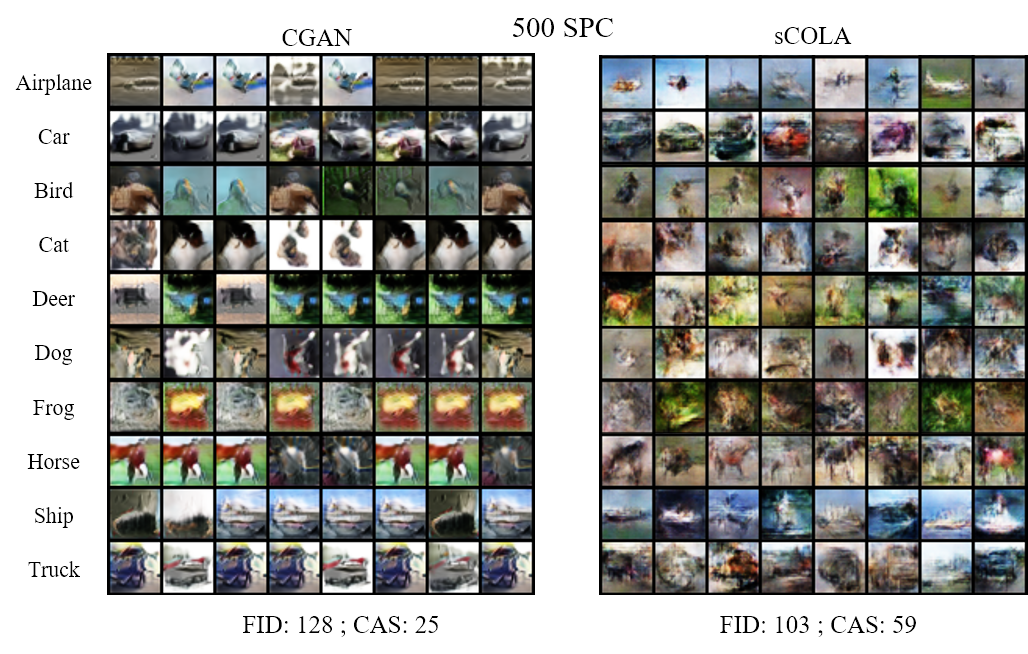} 
\end{figure*}

\begin{figure*}[ht]
\vspace{-0.5cm}
	\centering

    \includegraphics{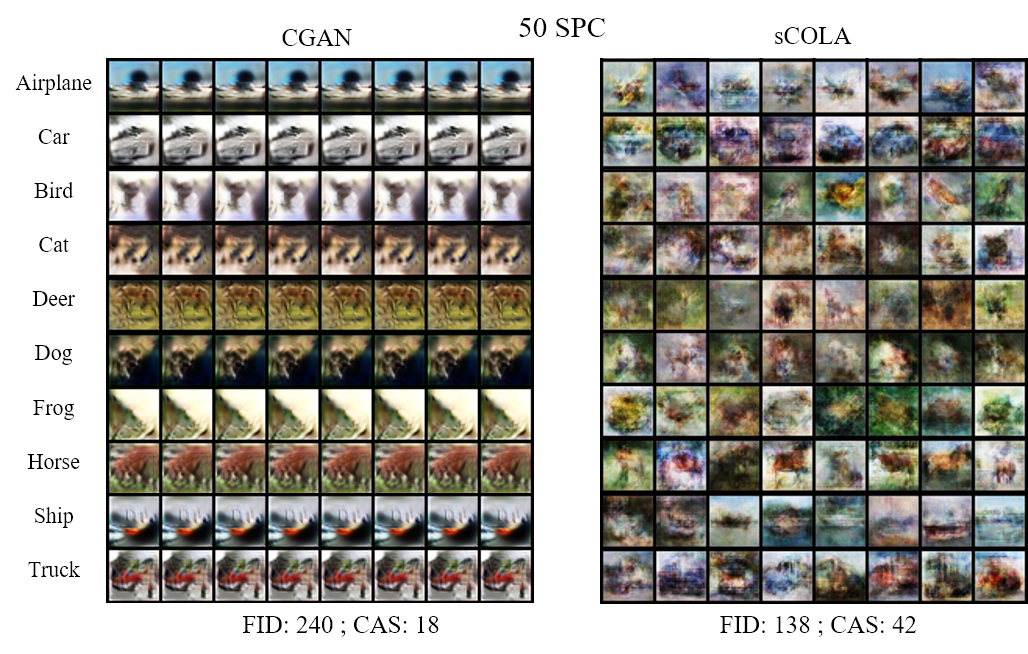}

\end{figure*}

\end{document}